\documentclass{article}

 \PassOptionsToPackage{numbers, compress}{natbib}
%

\usepackage{jmlr2e} 
%

\usepackage{hyperref}       
\usepackage{url}            
\usepackage{booktabs}       
\usepackage{amsfonts}       
\usepackage{nicefrac}       
\usepackage{microtype}      

\usepackage{algorithm}
\usepackage{algorithmic}
\usepackage{hyperref}
\usepackage{natbib}


\usepackage{amsmath,amssymb}
\usepackage{pgfplots}
\usepackage{color}
\usepackage{multirow}
\usepackage{rotating}
\usepackage{appendix}
\usepackage{tikz,pgfplots}
\usepackage{subcaption}

\input{mysymbol.sty}



%
\newtheorem{assumption}{\hspace{0pt}\bf Assumption}

\begin{document}

\jmlrheading{1}{2000}{1-48}{4/00}{10/00}{}


\ShortHeadings{Escaping Saddle Points in Constrained Optimization}{Mokhtari, Ozdaglar, and Jadbabaie}
\firstpageno{1}

\title{Escaping Saddle Points in Constrained Optimization}

\author{\name Aryan Mokhtari  \email aryanm@mit.edu \\
    \name Asuman Ozdaglar \email asuman@mit.edu\\
       \name Ali Jadbabaie \email jadbabai@mit.edu  \\
       \addr Laboratory for Information and Decision Systems\\
       Massachusetts Institute of Technology\\
       Cambridge, MA 02139
       }
\editor{}

\maketitle

\thispagestyle{empty}

\vspace{15mm}

\begin{abstract}
In this paper, we study the problem of escaping from saddle points in smooth nonconvex optimization problems subject to a convex set $\ccalC$. We propose a generic framework that yields convergence to a second-order stationary point of the problem, if the convex set $\ccalC$ is simple for a quadratic objective function. Specifically, our results hold if one can find a $\rho$-approximate solution of a quadratic program subject to $\ccalC$ in polynomial time, where $\rho<1$ is a positive constant that depends on the structure of the set $\ccalC$. Under this condition, we show that the sequence of iterates generated by the proposed framework reaches an $(\eps,\gamma)$-second order stationary point (SOSP) in at most $\mathcal{O}(\max\{\eps^{-2},\rho^{-3}\gamma^{-3}\})$ iterations. {We further characterize the overall complexity of reaching an SOSP when the convex set $\ccalC$ can be written as a set of quadratic constraints and the objective function Hessian has a specific structure over the convex set $\ccalC$}. Finally, we extend our results to the stochastic setting and characterize the number of stochastic gradient and Hessian evaluations to reach an $(\eps,\gamma)$-SOSP.


\end{abstract}
\vspace{10mm}

\section{Introduction}

There has been a recent revival of interest in non-convex optimization, due to obvious applications in machine learning. While the modern history of the subject goes back six or seven decades, the recent attention to the topic stems from new applications as well as availability of modern analytical and  computational tools, providing  a new perspective on classical problems. Following this trend, in this paper we focus on the  problem of minimizing a smooth nonconvex function over a convex set as
 \begin{equation}\label{eq:main_problem}
 \text{minimize}\ f(\bbx), \qquad \text{subject to}\ \bbx\in \ccalC,
 \end{equation}
where $\bbx\in \reals^d$, $\ccalC \subset \reals^d$ is a closed convex set, and $f:\reals^d \to \reals$ is a twice continuously differentiable function over $\ccalC$. It is known that finding a global minimum of Problem~\eqref{eq:main_problem} is hard. Equally well-known is the fact that for certain nonconvex problems, all local minimizers are global. These include, for example, matrix completion \citep{DBLP:conf/nips/GeLM16}, phase retrieval \citep{DBLP:conf/isit/SunQW16}, and dictionary learning \citep{DBLP:journals/tit/SunQW17}. For such problems, finding a global minimum of \eqref{eq:main_problem} reduces to the problem of finding one of its local minima.

Given the well-known hardness results in finding stationary points, recent focus has shifted in characterizing approximate stationary points. When the objective function $f$ is convex, finding an $\eps$-first-order stationary point is often sufficient since it leads to finding an approximate local (and hence global) minimum. However, in the nonconvex setting, even when the problem is unconstrained, i.e., $\ccalC= \reals^d$, convergence to a first-order stationary point (FOSP) is not enough as the critical point to which convergence is established might be a saddle point. It is therefore natural  to look at higher order derivatives and search for a second-order stationary points. Indeed, under the assumption that all the saddle points are strict (formally defined later), in both unconstrained and constrained settings, convergence to a second-order stationary point (SOSP) implies convergence to a local minimum. While convergence to an SOSP has been thoroughly investigated in the recent literature for the unconstrained setting, the overall complexity for the constrained setting has not been studied yet.



\textbf{Contributions.}
Our main contribution is to propose a generic framework which generates a sequence of iterates converging to an approximate second-order stationary point for the constrained nonconvex problem in \eqref{eq:main_problem}, when the convex set $\ccalC$ has a specific structure that allows for approximate minimization of a quadratic loss over the feasible set.  The proposed framework consists of two main stages: First, it utilizes first-order information to reach a first-order stationary point; next, it incorporates second-order information to escape from a stationary point if it is a local maximizer or a strict saddle point. 
We show that  the proposed approach leads to an $(\eps,\gamma)$-second-order stationary point (SOSP) for Problem \eqref{eq:main_problem} (check Definition~\ref{def_sosp_constrained}). The proposed approach utilizes advances in constant-factor optimization of nonconvex quadratic programs \citep{ye1992affine,fu1998approximation,tseng2003further} that find a $\rho$-approximate solution over $\ccalC$ in polynomial time, where $\rho$ is a positive constant smaller than $1$ that depends on the structure of $\ccalC$. When such approximate solution exists, the sequence of iterates generated by the proposed framework reaches an $(\eps,\gamma)$-SOSP of Problem~\eqref{eq:main_problem}  in at most $\mathcal{O}(\max\{\eps^{-2},\rho^{-3}\gamma^{-3}\})$ iterations. 

We show that quadratic constraints satisfy the required condition for the convex set $\ccalC$ if the objective function Hessian $\nabla^2 f$ has a specific structure over the convex set $\ccalC$ (formally described later). For this case, we show that it is possible to achieve an $(\eps,\gamma)$-SOSP after at most $\mathcal{O}(\max\{\tau\eps^{-2} ,d^3m^7\gamma^{-3}\})$ arithmetic operations, where $d$ is the dimension of the problem and $\tau$ is the number of required arithmetic operations to solve a linear program over $\ccalC$ or to project a point onto $\ccalC$. We further extend our results to the stochastic setting and show that we can reach an $(\eps,\gamma)$-SOSP after computing at most $\mathcal{O}(\max\{\eps^{-4},\eps^{-2}\rho^{-4}\gamma^{-4},  \rho^{-7}\gamma^{-7}\})$ stochastic gradients and $\mathcal{O}(\max\{\eps^{-2}\rho^{-3}\gamma^{-3},\rho^{-5}\gamma^{-5}\})$ stochastic Hessians.



\subsection{Related work} 


\textbf{Unconstrained case}. The rich literature on nonconvex optimization provides a plethora of algorithms for reaching stationary points of a smooth \textit{unconstrained} minimization problem. Convergence to first-order stationary points (FOSP) has been widely studied for both deterministic \citep{nesterov2013introductory, DBLP:conf/stoc/AgarwalZBHM17, DBLP:journals/corr/CarmonDHS16, DBLP:conf/icml/CarmonDHS17, carmon2017lowerpart1, carmon2017lowerpart2} and stochastic settings \citep{DBLP:conf/cdc/ReddiSPS16,DBLP:conf/icml/ReddiHSPS16,DBLP:conf/icml/ZhuH16,DBLP:conf/nips/LeiJCJ17}. Stronger results which indicate convergence to an SOSP are also established. Numerical optimization methods such as trust-region methods \citep{DBLP:journals/jc/CartisGT12,DBLP:journals/mp/CurtisRS17,DBLP:journals/jgo/MartinezR17} and cubic regularization algortihms  \citep{DBLP:journals/mp/NesterovP06,DBLP:journals/mp/CartisGT11,DBLP:journals/mp/CartisGT11a} can reach an approximate second-order stationary point in a finite number of iterations; however, typically the computational complexity of each iteration could be relatively large due to the cost of solving trust-region or regularized cubic subproblems. Recently, a new line of research has emerged that focuses on the overall computational cost to achieve an SOSP. These results build on the idea of escaping from strict saddle points with perturbing the iterates by injecting a properly chosen noise  \citep{DBLP:conf/colt/GeHJY15, DBLP:conf/icml/Jin0NKJ17, DBLP:journals/corr/abs-1711-10456}, or by updating the iterates using the eigenvector corresponding to the smallest eigenvalue of the Hessian \citep{DBLP:journals/corr/CarmonDHS16, DBLP:journals/corr/abs-1708-08694, xu2017first,royer2017complexity,DBLP:conf/stoc/AgarwalZBHM17,DBLP:conf/aistats/ReddiZSPBSS18,paternain2017second}. 

\textbf{Constrained case}. Asymptotic convergence to first-order and second-order stationary points for the constrained optimization problem in~\eqref{eq:main_problem} has been studied in the numerical optimization community \citep{burke1990convergence,conn1993global,facchinei1998convergence,di2005convergence}. Recently, finite-time analysis for convergence to an FOSP of the generic smooth constrained  problem in~\eqref{eq:main_problem} has received a lot of attention. In particular, \citep{lacoste2016convergence} shows that the sequence of iterates generated by the update of Frank-Wolfe converges to an $\eps$-FOSP after $\mathcal{O}(\eps^{-2})$ iterations. 
The authors of \citep{ghadimi2016mini} consider norm of gradient mapping as a measure of non-stationarity and show that the projected gradient method has the same complexity of $\mathcal{O}(\eps^{-2})$. Similar result for the accelerated projected gradient method is also shown \citep{ghadimi2016accelerated}. Adaptive cubic regularization methods in \citep{cartis2012adaptive,cartis2013evaluation,cartis2015evaluation} improve these results using second-order information and obtain an $\eps$-FOSP of Problem~\eqref{eq:main_problem} after at most $\mathcal{O}(\eps^{-3/2})$ iterations. Finite time analysis for convergence to an SOSP has also been studied for linear constraints. In particular, \citep{bian2015complexity} studies convergence to an SOSP of \eqref{eq:main_problem} when the set $\ccalC$ is a linear constraint of the form $\bbx\geq 0$ and propose a trust region interior point method that obtains an $(\eps,\sqrt{\eps})$-SOSP in $\mathcal{O}(\eps^{-3/2})$ iterations. The work in \citep{haeser2017optimality} extends their results to the case that the objective function is potentially not differentiable or not twice differentiable on the boundary of the feasible region. The authors in
 \citep{cartis2017second} focus on the general convex constraint case and introduce a trust region algorithm that requires $\mathcal{O}(\epsilon^{-3})$ iterations to obtain an SOSP; however, each iteration of their proposed method requires access to the exact solution of a nonconvex quadratic program (finding its global minimum) which, in general, could be computationally prohibitive. To the best of our knowledge, our paper provides the first finite-time overall computational complexity analysis for reaching an SOSP of Problem~\eqref{eq:main_problem}.


\section{Preliminaries and Definitions}

In the case of \textit{unconstrained} minimization of the objective function $f$, the first-order and second-order necessary conditions for a point $\bbx^*$ to be a local minimum of that are defined as
$\nabla f(\bbx^*)=\bb0_d$ and $\nabla^2 f(\bbx^*)\succeq \bb0_{d\times d}$,
respectively. If a point satisfies these conditions it is called a \textit{second-order stationary point} (SOSP). If the second condition becomes strict, i.e., $\nabla^2 f(\bbx)\succ \bb0$, then we recover the sufficient conditions for a local minimum. However, to derive finite time convergence bounds for achieving an SOSP, these conditions should be relaxed. In other words, the goal should be to find an \textit{approximate} SOSP where the approximation error can be arbitrarily small. For the case of unconstrained minimization, a point $\bbx^*$ is called an $(\eps,\gamma)$-second-order stationary point if it satisfies 
$
\|\nabla f(\bbx^*)\| \leq \eps$ and $\nabla^2 f(\bbx^*)\succeq -\gamma \bbI_d,$
where $\eps$ and $\gamma$ are arbitrary positive constants. 
%
%
%
%
To study the constrained setting, we first state the necessary conditions for a local minimum of problem \eqref{eq:main_problem}.

\begin{proposition}[\citep{bertsekas1999nonlinear}]\label{prop:nec_conds}
If $\bbx^*\in \ccalC$ is a local minimum of the function $f$ over the convex set $\ccalC$, then 
\begin{align}
&\nabla f(\bbx^*)^\top(\bbx-\bbx^*)\geq 0 ,\quad \forall \ \bbx \in \ccalC, \label{eq:nec_cond_first_order}\\
& (\bbx-\bbx^*)^\top\nabla^2 f(\bbx^*)(\bbx-\bbx^*) \geq 0 ,\quad \forall \ \bbx \in \ccalC\  \ \st\ \nabla f(\bbx^*)^\top(\bbx-\bbx^*)=0.\label{eq:nec_cond_second_order}
\end{align}
%
%
\end{proposition}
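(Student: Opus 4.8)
The plan is to reduce the multivariate constrained problem to a family of one-dimensional problems by exploiting convexity of $\ccalC$. For any fixed $\bbx \in \ccalC$, convexity guarantees that the entire segment $\bbx^* + t(\bbx - \bbx^*)$ remains in $\ccalC$ for every $t \in [0,1]$. I would therefore define the scalar function $g(t) := f(\bbx^* + t(\bbx - \bbx^*))$ on $[0,1]$ and observe that, since $\bbx^*$ is a local minimum of $f$ over $\ccalC$, the point $t=0$ is a local minimum of $g$ restricted to $[0,1]$. Everything then follows from the familiar first- and second-order necessary conditions for a scalar function at a one-sided (boundary) minimum, combined with the chain rule to translate derivatives of $g$ back into the gradient and Hessian of $f$.

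For the first-order condition, I would use the expansion $g(t) = g(0) + t\,g'(0) + o(t)$ together with $g(t) \geq g(0)$ for all sufficiently small $t > 0$ to conclude that $g'(0) \geq 0$. Since the chain rule gives $g'(0) = \nabla f(\bbx^*)^\top(\bbx - \bbx^*)$, this is precisely \eqref{eq:nec_cond_first_order}; because $\bbx \in \ccalC$ was arbitrary, the inequality holds for all feasible $\bbx$.

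For the second-order condition, I would restrict attention to directions with $\nabla f(\bbx^*)^\top(\bbx - \bbx^*) = 0$, i.e. $g'(0) = 0$. The linear term then drops out of the expansion, leaving $g(t) = g(0) + \tfrac{t^2}{2} g''(0) + o(t^2)$, and local minimality at $t=0$ forces $g''(0) \geq 0$. A second application of the chain rule yields $g''(0) = (\bbx - \bbx^*)^\top \nabla^2 f(\bbx^*)(\bbx - \bbx^*)$, which is exactly \eqref{eq:nec_cond_second_order}.

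The only genuinely delicate point is the bookkeeping around the one-sided minimum: since $t$ ranges over $[0,1]$ rather than a full neighborhood of $0$, I may only perturb in the forward direction $t > 0$. This is exactly why the first-order condition is an inequality rather than an equality, and why the second-order condition is imposed only along directions that neutralize the first-order term (otherwise the linear term, whose sign is unconstrained for general feasible directions, would dominate and render the quadratic term uninformative). Making the $o(t)$ and $o(t^2)$ remainders rigorous requires $f$ to be twice continuously differentiable along the segment, which is guaranteed by the smoothness assumption on $f$ over $\ccalC$; I would invoke Taylor's theorem with the Peano form of the remainder to justify the expansions.
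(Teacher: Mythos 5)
Your proof is correct and takes essentially the same approach as the paper's: both restrict $f$ to the segment $\bbx^* + t(\bbx - \bbx^*)$, which is feasible by convexity of $\ccalC$, and apply a Taylor expansion along it, using the vanishing of the linear term on the constrained directions to conclude nonnegativity of the quadratic term. The only cosmetic differences are that you argue directly with the Peano-form remainder whereas the paper argues by contradiction using the mean-value form of the remainder together with continuity of the Hessian, and that you also prove the first-order condition, which the paper simply cites from \citep{bertsekas1999nonlinear}.
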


%
%

The conditions in \eqref{eq:nec_cond_first_order} and \eqref{eq:nec_cond_second_order} are the first-order and second-order necessary optimality conditions, respectively. By making the inequality in \eqref{eq:nec_cond_second_order} strict, i.e., $(\bbx-\bbx^*)^\top\nabla^2 f(\bbx^*)(\bbx-\bbx^*)>0$, we recover the sufficient conditions for a local minimum when $\ccalC$ is a polyhedral \citep{bertsekas1999nonlinear}. Further, if the inequality in~\eqref{eq:nec_cond_second_order} is replaced by $(\bbx-\bbx^*)^\top\nabla^2 f(\bbx^*)(\bbx-\bbx^*) \geq  \delta \|\bbx-\bbx^*\|^2$ for some $\delta>0$, we obtain the sufficient conditions for a local minimum of Problem \eqref{eq:main_problem} for any convex constraint $\ccalC$; see \citep{bertsekas1999nonlinear}. If a point $\bbx^*$ satisfies the conditions in~\eqref{eq:nec_cond_first_order} and \eqref{eq:nec_cond_second_order} it is an SOSP of Problem~\eqref{eq:main_problem}. 
As in the unconstrained setting, the first-order and second-order optimality conditions may not be satisfied in finite number of iterations, and we focus on finding an approximate SOSP. 

\begin{definition}\label{def_sosp_constrained}
Recall the twice continuously differentiable function $f:\reals^d\to \reals$ and the convex closed set $\ccalC\subset \reals^d$ introduced in Problem~\eqref{eq:main_problem}. We call $\bbx^*\in \ccalC$ an $(\eps,\gamma)$-second order stationary point of Problem~\eqref{eq:main_problem} if the following conditions are satisfied.
\begin{align}
&\nabla f(\bbx^*)^\top(\bbx-\bbx^*)\geq -\eps ,\quad \forall \ \bbx \in \ccalC,\\
& (\bbx-\bbx^*)^\top\nabla^2 f(\bbx^*)(\bbx-\bbx^*) \geq -\gamma ,\quad \forall \ \bbx \in \ccalC\  \ \st\ \nabla f(\bbx^*)^\top(\bbx-\bbx^*)=0.
\end{align}
If a point only satisfies the first condition, we call it an $\eps$-first order stationary point.
\end{definition}

We further formally define strict saddle points for the constrained optimization problem in~\eqref{eq:main_problem}.

\begin{definition}\label{strict_saddle_constrained}
A point $\bbx^*\in \ccalC$ is a $\delta$-strict saddle point of Problem~\eqref{eq:main_problem} if (i) for all $\bbx \in \ccalC$ the condition $\nabla f(\bbx^*)^\top(\bbx-\bbx^*)\geq 0$ holds, and (ii) there exists a point $\bby$ such that 
\begin{equation}
(\bby-\bbx^*)^\top\nabla^2 f(\bbx^*)(\bby-\bbx^*) < -\delta ,\quad \ \bby \in \ccalC\  \text{and}\ \nabla f(\bbx^*)^\top(\bby-\bbx^*)=0.
\end{equation}
\end{definition}

According to Definitions \ref{def_sosp_constrained} and \ref{strict_saddle_constrained} if all saddle points are $\delta$-strict and $\gamma\leq \delta$, any $(\eps,\gamma)$-SOSP of Problem~\eqref{eq:main_problem} is an approximate local minimum. 

We emphasize that in this paper we do not assume that all saddles are strict to prove convergence to an SOSP. We formally defined strict saddles just to clarify that if all the saddles are strict then convergence to an approximate SOSP is equivalent to convergence to an approximation local minimum.

 Our goal throughout the rest of the paper is to design an algorithm which finds an $(\eps,\gamma)$-SOSP of Problem~\eqref{eq:main_problem}. To do so, we first assume the following conditions are satisfied.

\begin{assumption}\label{assumption:lip_grad}
The gradients $\nabla f$ are $L$-Lipschitz continuous over the set $\ccalC$, i.e., for  any $\bbx,\tbx\in \ccalC$, 
\begin{align}
\| \nabla f(\bbx)-\nabla f(\tbx)\| \leq L\|\bbx-\tbx\|.
\end{align}
\end{assumption}

\begin{assumption}\label{assumption:lip_hessian}
 The Hessians $\nabla^2 f$ are $M$-Lipschitz continuous over the set $\ccalC$, i.e., for any $\bbx,\tbx\in \ccalC$
\begin{align}
\| \nabla^2 f(\bbx)-\nabla^2 f(\tbx)\| \leq M\|\bbx-\tbx\|.
\end{align}
\end{assumption}

\begin{assumption}\label{assumption:bounded_diameter}
 The diameter of the compact convex set $\ccalC$ is upper bounded by a constant $D$, i.e., 
\begin{align}
 \max_{\bbx,\tbx \in \ccalC}\{\|\bbx-\hbx\|\}\leq D.
\end{align}
\end{assumption}


\section{Main Result}

In this section, we introduce a generic framework to reach an $(\eps,\gamma)$-SOSP of the non-convex function $f$ over the convex set $\ccalC$, when $\ccalC$ has a specific structure as we describe below. In particular, we focus on the case when we can solve a quadratic program (QP) of the form 
 \begin{align}\label{eq:quad_problem_template}
 & \text{minimize}\quad \bbx^\top\bbA\bbx+\bbb^\top\bbx +c \qquad  \text{subject to}\quad \bbx\in \ccalC,
 \end{align}
 up to a constant factor $\rho\leq 1$ in a finite number of arithmetic operations. Here, $\bbA\in\reals^d$ is a symmetric matrix, $\bbb\in \reals^d$ is a vector, and $c\in \reals$ is a scalar. 
  To clarify the notion of solving a problem up to a constant factor $\rho$, consider $\bbx^*$ as a global minimizer of \eqref{eq:quad_problem_template}. Then, we say Problem \eqref{eq:quad_problem_template} is solved up to a constant factor $ \rho\in (0,1]$ if we have found a feasible solution $\tbx\in \ccalC$ such that  
\begin{equation}\label{rho_opt_cond}
{\bbx^*}^\top\bbA\bbx^*+\bbb^\top\bbx^*+c\ \leq\ \tbx^\top\bbA\tbx+\bbb^\top\tbx +c \ \leq\
 \rho({\bbx^*}^\top\bbA\bbx^*+\bbb^\top\bbx^*+c).
\end{equation}
Note that here w.l.o.g. we have assumed that the optimal objective function value ${\bbx^*}^\top\bbA\bbx^*+\bbb^\top\bbx^*+c$ is non-positive. Larger constant $\rho$ implies that the approximate solution is more accurate. If $\tbx$ satisfies  the condition in \eqref{rho_opt_cond}, we call it a $\rho$-approximate solution of Problem~\eqref{eq:quad_problem_template}. Indeed, if $\rho=1$ then $\tbx$ is a global minimizer of Problem~\eqref{eq:quad_problem_template}.

In Algorithm \ref{alg_generic_framework}, we introduce a generic framework that achieves an $(\eps,\gamma)$-SOSP of Problem~\eqref{eq:main_problem} whose running time is polynomial in $\eps^{-1}$, $\gamma^{-1}$, $\rho^{-1}$ and $d$, when we can find a $\rho$-approximate solution of a quadratic problem of the form \eqref{eq:quad_problem_template} in a time that is polynomial in $d$.
The proposed scheme consists of two major stages. In the first phase, as mentioned in Steps 2-4, we use a first-order update, i.e., a gradient-based update, to find an $\eps$-FOSP, i.e., we update the decision variable $\bbx$ according to a first-order update until we reach a point $\bbx_t$ that satisfies the condition 
\begin{equation}\label{eq:first_order_stat_cond}
\nabla f(\bbx_t)^\top(\bbx-\bbx_t)\geq -\eps, \quad \forall \ \bbx\in \ccalC .
\end{equation}
In Section \ref{sec:first_order_section}, we study in detail projected gradient descent and conditional gradient algorithms for the first order phase of the proposed framework. Interestingly, both of these algorithms require at most $\mathcal{O}(\eps^{-2})$ iterations to reach an $\eps$-first order stationary point. 

The second stage of the proposed scheme uses second-order information of the objective function~$f$ to escape from the stationary point if it is a local maximum or a strict saddle point. To be more precise, if we assume that $\bbx_t$ is a feasible point satisfying the condition \eqref{eq:first_order_stat_cond}, we then aim to find a descent direction by solving the following quadratic program 
 \begin{align}\label{eq:quad_problem}
 & \text{minimize}\quad q(\bbu) := (\bbu-\bbx_t)^\top\nabla^2 f(\bbx_t)(\bbu-\bbx_t)  \nonumber\\
 & \text{subject to}\quad \bbu\in \ccalC, \quad \nabla f(\bbx_t)^\top(\bbu-\bbx_t)=0,
 \end{align}
 up to a constant factor $\rho$ where $\rho\in(0,1]$. To be more specific, if we define $q(\bbu^*)$ as the optimal objective function value of the program in \eqref{eq:quad_problem}, we focus on the cases that we can obtain a feasible point $\bbu_t$ which is a  $\rho$-approximate solution of Problem~\eqref{eq:quad_problem}, i.e., $\bbu_t\in \ccalC$ and
\begin{equation}\label{eq:rho_approx_solution}
q(\bbu^*)\ \leq\ q(\bbu_t)\ \leq \ \rho \ q(\bbu^*).
\end{equation}
The problem formulation in \eqref{eq:quad_problem} can be transformed into the quadratic program in \eqref{eq:quad_problem_template}; see Section~\ref{sec:second_stage} for more details. Note that the constant $\rho$ is independent of $\eps$, $\gamma$, and $d$ and only depends on the structure of the convex set $\ccalC$. For instance, if $\ccalC$ is defined in terms of $m$ quadratic constraints one can find a $\rho=m^{-2}$ approximate solution of \eqref{eq:quad_problem} after at most $\mathcal{\tilde{O}}(md^3)$ arithmetic operations (Section~\ref{sec:second_stage}).

 After computing a feasible point $\bbu_t$ satisfying the condition in \eqref{eq:rho_approx_solution}, we check the quadratic objective function value at the point $\bbu_t$, and if the inequality $q(\bbu_t)<-\rho\gamma$ holds, we follow the update
\begin{equation}\label{eq:second_stage_update}
\bbx_{t+1} = (1-\sigma) \bbx_{t} + \sigma \bbu_t,
\end{equation}
where $\sigma$ is a positive stepsize. Otherwise, we stop the process and return $\bbx_t$ as an $(\eps,\gamma)$-second order stationary point of Problem~\eqref{eq:main_problem}. To check this claim, note that Algorithm \ref{alg_generic_framework} stops if we reach a point $\bbx_t$ that satisfies the first-order stationary condition $\nabla f(\bbx_t)^\top(\bbx-\bbx_t) \geq -\eps$, and the objective function value for the $\rho$-approximate solution of the quadratic subproblem is larger than $-\rho\gamma$, i.e., $q(\bbu_t)\geq -\rho \gamma $. The second condition alongside with the fact that $q(\bbu_t)$ satisfies \eqref{eq:rho_approx_solution} implies that $q(\bbu^*)\geq -\gamma$.  Therefore, for any $\bbx\in\ccalC$ and $\nabla f(\bbx_t)^\top(\bbx-\bbx_t) =0$, it holds that 
\begin{equation}\label{eq_SONOC}
 (\bbx-\bbx_t)^\top\nabla^2 f(\bbx_t)(\bbx-\bbx_t)\geq -\gamma.
\end{equation}
These two observations show that the outcome of the proposed framework in Algorithm \ref{alg_generic_framework} is an $(\eps,\gamma)$-SOSP of Problem~\eqref{eq:main_problem}. Now it remains to characterize the number of iterations that Algorithm~\ref{alg_generic_framework} needs to perform before reaching an $(\eps,\gamma)$-SOSP which we formally state in the following theorem.

\begin{algorithm}[t]
\begin{algorithmic}[1]
\caption{Generic framework for escaping saddles in constrained optimization}\label{alg_generic_framework} 
\small{\REQUIRE Stepsize $\sigma>0$. Initialize $\bbx_0\in\ccalC$
\FOR {$t=1,2,\ldots$}
    \IF{  $\bbx_t$ is not an $\eps$-first order stationary point }
            \STATE Compute $\bbx_{t+1}$ using first-order information (Frank-Wolfe or projected gradient descent)
            \ELSE
            \STATE Find $ \bbu_t$: a $\rho$-approximate solution of \eqref{eq:quad_problem}
            \IF {$q(\bbu_t)  < -\rho\gamma$ }
            	\STATE  Compute the updated variable $\bbx_{t+1} = (1-\sigma) \bbx_{t} + \sigma \bbu_t$;\vspace{-1mm}
            \ELSE
            	\STATE Return $\bbx_t$ and stop. 
            \ENDIF
        \ENDIF
\ENDFOR}
\end{algorithmic}
\end{algorithm}

\begin{theorem}\label{thm:main_thm}
Consider the optimization problem in \eqref{eq:main_problem}. Suppose the conditions in Assumptions \ref{assumption:lip_grad}-\ref{assumption:bounded_diameter} are satisfied. 
If in the first-order stage, i.e., Steps 2-4, we use the update of Frank-Wolfe or projected gradient descent, the generic framework proposed in Algorithm \ref{alg_generic_framework} finds an $(\eps,\gamma)$-second-order stationary point of Problem~\eqref{eq:main_problem} after at most $\mathcal{O}(\max\{\eps^{-2} , \rho^{-3}\gamma^{-3}\})$ iterations.
\end{theorem}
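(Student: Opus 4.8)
The plan is to classify every iteration of Algorithm~\ref{alg_generic_framework} as either a \emph{first-order step} (the Frank--Wolfe or projected-gradient update in Steps~2--4) or an \emph{escape step} (the update \eqref{eq:second_stage_update}), to bound the number of each type separately through a uniform per-iteration decrease of the objective $f$, and then to add the two counts. Since $\ccalC$ is compact (Assumption~\ref{assumption:bounded_diameter}) and $f$ is continuous, $f$ attains a minimum on $\ccalC$; write $\Delta := f(\bbx_0) - \min_{\bbx\in\ccalC} f(\bbx) < \infty$. Each count will have the form (admissible decrease)$/$(guaranteed per-step decrease), and because both step types act on the \emph{same} monotone sequence $f(\bbx_t)$, their decreases are charged against a single budget $\Delta$; this additive accounting is what converts the two per-step estimates into the final $\mathcal{O}(\max\{\eps^{-2},\rho^{-3}\gamma^{-3}\})$ bound.

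For the escape step I would start from the cubic descent inequality implied by the $M$-Lipschitz Hessian (Assumption~\ref{assumption:lip_hessian}),
\begin{equation*}
f(\bbx_{t+1}) \le f(\bbx_t) + \nabla f(\bbx_t)^\top(\bbx_{t+1}-\bbx_t) + \tfrac12 (\bbx_{t+1}-\bbx_t)^\top \nabla^2 f(\bbx_t)(\bbx_{t+1}-\bbx_t) + \tfrac{M}{6}\|\bbx_{t+1}-\bbx_t\|^3 .
\end{equation*}
Substituting $\bbx_{t+1}-\bbx_t = \sigma(\bbu_t-\bbx_t)$ from \eqref{eq:second_stage_update}, the linear term vanishes because the feasibility constraint of \eqref{eq:quad_problem} forces $\nabla f(\bbx_t)^\top(\bbu_t-\bbx_t)=0$, the quadratic term equals $\tfrac{\sigma^2}{2}q(\bbu_t)$, and the cubic term is at most $\tfrac{M\sigma^3 D^3}{6}$ by Assumption~\ref{assumption:bounded_diameter}. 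Using the escape condition $q(\bbu_t)<-\rho\gamma$ gives
\begin{equation*}
f(\bbx_{t+1}) < f(\bbx_t) - \tfrac{\sigma^2}{2}\rho\gamma + \tfrac{M\sigma^3 D^3}{6} .
\end{equation*}
Choosing $\sigma = \rho\gamma/(MD^3)$ (which is $\le 1$, so that $\bbx_{t+1}\in\ccalC$ by convexity, once $\rho\gamma$ is small) makes the right-hand side $f(\bbx_t) - \tfrac{\rho^3\gamma^3}{3M^2D^6}$. Hence each escape step decreases $f$ by $\Omega(\rho^3\gamma^3)$, and the total number of escape steps is at most $3M^2D^6\Delta/(\rho^3\gamma^3) = \mathcal{O}(\rho^{-3}\gamma^{-3})$.

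For the first-order steps I would invoke the per-iteration progress guarantees of projected gradient descent and Frank--Wolfe developed in Section~\ref{sec:first_order_section}: as long as $\bbx_t$ fails the condition \eqref{eq:first_order_stat_cond}, one such step decreases $f$ by $\Omega(\eps^2)$ (up to the constants $L$ and $D$ of Assumptions~\ref{assumption:lip_grad} and~\ref{assumption:bounded_diameter}). Summing this decrease over \emph{all} first-order steps taken during the entire run, against the same budget $\Delta$, bounds the total number of first-order steps by $\mathcal{O}(\eps^{-2})$. Adding the two counts yields the claimed $\mathcal{O}(\max\{\eps^{-2},\rho^{-3}\gamma^{-3}\})$ iteration bound.

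The main obstacle is the interleaving of the two regimes: an escape step generally leaves $\bbx_{t+1}$ no longer first-order stationary, so the algorithm re-enters the first-order phase and the two step types alternate an a priori unknown number of times. The crucial point I would stress is that this interleaving causes no double-counting: because $f(\bbx_t)$ is monotone nonincreasing under both updates and bounded below, the per-step decreases of both regimes are charged against one global potential, so the phase structure is irrelevant and the two counts simply add. The only genuine care needed is (i) verifying that the first-order decrease of $\Omega(\eps^2)$ holds starting from the (non-stationary) point produced by an escape step, not merely from $\bbx_0$, and (ii) ensuring that the fixed escape stepsize $\sigma$ keeps the iterates feasible, i.e. $\sigma\le 1$.
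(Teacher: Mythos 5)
Your proposal is correct and follows essentially the same route as the paper: the paper's proof combines the per-iteration decrease of $\mathcal{O}(\eps^2)$ from Propositions~\ref{FW_proposition}/\ref{PGD_proposition} with the $\mathcal{O}(\rho^3\gamma^3)$ decrease from Proposition~\ref{SOU_proposition} (whose Taylor-expansion derivation you reproduce almost verbatim), and charges both against the budget $f(\bbx_0)-f(\bbx^*)$, yielding $\mathcal{O}\bigl(\max\{\eps^{-2},\rho^{-3}\gamma^{-3}\}\bigr)$ iterations. The only cosmetic difference is bookkeeping—you bound the two step counts separately and add them, while the paper divides the budget by the minimum per-step decrease—and your two flagged caveats (progress from post-escape iterates, and $\sigma\le 1$ for feasibility) are sound observations that the paper handles implicitly.
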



To prove the claim in Theorem~\ref{thm:main_thm}, we first review first-order conditional gradient and projected gradient algorithms and show that if the current iterate is not a first-order stationary point, by following either of these updates the objective function value decreases by a constant of $\mathcal{O}(\eps^2)$ (Section~\ref{sec:first_order_section}). We then focus on the second stage of Algorithm~\ref{alg_generic_framework} which corresponds to the case that the current iterate is an $\eps$-FOSP and we need to solve the quadratic program in \eqref{eq:quad_problem} approximately (Section~\ref{sec:second_stage}). In this case, we show that if the iterate is not an $(\eps,\gamma)$-SOSP, by following the update in \eqref{eq:second_stage_update} the objective function value decreases at least by a constant of $\mathcal{O}(\rho^3\gamma^3)$. Finally, by combining these two results it can be shown that Algorithm~\ref{alg_generic_framework} finds an $(\eps,\gamma)$-SOSP after at most $\mathcal{O}(\max\{\eps^{-2} , \rho^{-3}\gamma^{-3}\})$ iterations.

\section{First-Order Step: Convergence to a First-Order Stationary Point}\label{sec:first_order_section}

In this section, we study two different first-order methods for the first stage of Algorithm \ref{alg_generic_framework}. The result in this section can also be independently used  for convergence to an FOSP of Problem~\eqref{eq:main_problem} satisfying
\begin{equation}\label{eq:first_order_stationary_point_def}
\nabla f(\bbx^*)^\top(\bbx-\bbx^*)\geq -\eps,\qquad  \forall\ \bbx\in\ccalC, 
\end{equation}
where $\eps>0$ is a positive constant. Although for Algorithm~\ref{alg_generic_framework} we assume that $\ccalC$ has a specific structure as mentioned in \eqref{eq:quad_problem_template}, the results in this section hold for any closed and compact convex set~$\ccalC$. 
To keep our result as general as possible, in this section, we study both conditional gradient and projected-based methods when they are used in the first-stage of the proposed generic framework. 


\subsection{Conditional gradient update}

The conditional gradient (Frank-Wolfe) update has two steps. We first solve the linear program 
\begin{equation}\label{eq:FW_first_step}
\bbv_t=\argmax_{\bbv\in\ccalC} \{-\nabla f(\bbx_t)^\top\bbv\}.
\end{equation}
Then, we compute the updated variable $\bbx_{t+1}$ according to the update
\begin{equation}\label{eq:FW_second_step}
\bbx_{t+1}= (1-\eta)\bbx_t + \eta \bbv_t,
\end{equation}
where $\eta$ is a stepsize. 
In the following proposition, we show that if the current iterate is not an $\eps$-first order stationary point, then by updating the variable according to \eqref{eq:FW_first_step}-\eqref{eq:FW_second_step} the objective function value decreases. 
The proof of the following proposition is adopted from \citep{lacoste2016convergence}.

\begin{proposition}\label{FW_proposition}
Consider the optimization problem in \eqref{eq:main_problem}. Suppose Assumptions~\ref{assumption:lip_grad} and \ref{assumption:bounded_diameter} hold. Set the stepsize in \eqref{eq:FW_second_step} to $\eta=\eps/D^2L$. Then, if the iterate $\bbx_t$ at step $t$ is not an $\eps$-first order stationary point, the objective function value at the updated variable $\bbx_{t+1}$ satisfies the inequality 
\begin{equation}
f(\bbx_{t+1}) \leq  f(\bbx_t) -  \frac{\eps^2}{2D^2L}. 
\end{equation}
\end{proposition}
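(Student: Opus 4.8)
The plan is to combine the quadratic upper bound afforded by $L$-smoothness with the defining optimality property of the Frank-Wolfe direction and the diameter bound, and then read off the descent from the prescribed stepsize. First I would invoke the standard descent lemma, which is an immediate consequence of Assumption~\ref{assumption:lip_grad}: for any $\bbx,\bby\in\ccalC$,
\begin{equation*}
f(\bby) \leq f(\bbx) + \nabla f(\bbx)^\top(\bby-\bbx) + \frac{L}{2}\|\bby-\bbx\|^2.
\end{equation*}
Applying this with $\bbx=\bbx_t$ and $\bby=\bbx_{t+1}$ and using that the update \eqref{eq:FW_second_step} gives $\bbx_{t+1}-\bbx_t=\eta(\bbv_t-\bbx_t)$ yields
\begin{equation*}
f(\bbx_{t+1}) \leq f(\bbx_t) + \eta\,\nabla f(\bbx_t)^\top(\bbv_t-\bbx_t) + \frac{L\eta^2}{2}\|\bbv_t-\bbx_t\|^2.
\end{equation*}

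Next I would control the two terms on the right. For the linear term, the key observation is that the vertex $\bbv_t$ defined in \eqref{eq:FW_first_step} minimizes $\nabla f(\bbx_t)^\top\bbv$ over $\ccalC$, so $\nabla f(\bbx_t)^\top(\bbv_t-\bbx_t)=\min_{\bbv\in\ccalC}\nabla f(\bbx_t)^\top(\bbv-\bbx_t)$. Since by hypothesis $\bbx_t$ is \emph{not} an $\eps$-first order stationary point, the defining condition \eqref{eq:first_order_stationary_point_def} is violated by some feasible point, which forces this minimum below $-\eps$; hence $\nabla f(\bbx_t)^\top(\bbv_t-\bbx_t)\leq -\eps$. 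For the quadratic term, Assumption~\ref{assumption:bounded_diameter} gives $\|\bbv_t-\bbx_t\|\leq D$ because both points lie in $\ccalC$. Substituting these bounds produces
\begin{equation*}
f(\bbx_{t+1}) \leq f(\bbx_t) - \eta\,\eps + \frac{L\eta^2 D^2}{2}.
\end{equation*}

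Finally I would substitute $\eta=\eps/(D^2L)$: the linear term contributes $-\eps^2/(D^2L)$ and the quadratic term contributes $+\eps^2/(2D^2L)$, which combine to $-\eps^2/(2D^2L)$, giving exactly the claimed inequality. This $\eta$ is in fact the minimizer of the quadratic-in-$\eta$ right-hand side, which explains its appearance. The argument is a routine smoothness-plus-linearization computation, so I do not expect a genuine obstacle; the one subtlety to flag is feasibility of $\bbx_{t+1}$, since the convex combination in \eqref{eq:FW_second_step} remains in $\ccalC$ only when $\eta\in[0,1]$. This requires $\eps\leq D^2L$, which holds in the regime of interest where $\eps$ is a small tolerance; otherwise one would cap the stepsize at one and the same descent estimate continues to hold.
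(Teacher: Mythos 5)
Your proof is correct and follows essentially the same route as the paper's: the descent lemma from Assumption~\ref{assumption:lip_grad}, the optimality of the Frank--Wolfe direction $\bbv_t$ (which the paper phrases via the Frank--Wolfe gap $G(\bbx_t)>\eps$, equivalent to your bound $\nabla f(\bbx_t)^\top(\bbv_t-\bbx_t)\leq -\eps$), the diameter bound $\|\bbv_t-\bbx_t\|\leq D$, and substitution of $\eta=\eps/D^2L$. Your closing remark on feasibility of the convex combination when $\eta>1$ is a legitimate subtlety that the paper's proof silently ignores, and your proposed fix (capping $\eta$ at one) is sound.
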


The result in Proposition \ref{FW_proposition} shows that by following the update of the conditional gradient method the objective function value decreases by $\mathcal{O}(\eps^2)$, if an $\eps$-FOSP is not achieved. 

\begin{remark}
In step 3 of Algorithm~\ref{alg_generic_framework} we first check if $\bbx_t$ is an $\eps$-FOSP. This can be done by evaluating 
\begin{equation}\label{stop_criteria}
\min_{\bbx\in\ccalC} \{\nabla f(\bbx_t)^\top(\bbx-\bbx_t)\}=\max_{\bbx\in\ccalC} \{-\nabla f(\bbx_t)^\top\bbx\} + \nabla f(\bbx_t)^\top\bbx_t
\end{equation}
 and comparing the optimal value with $-\eps$. Note that the linear program in \eqref{stop_criteria} is the same as the one in \eqref{eq:FW_first_step}. Therefore, by checking the first-order optimality condition of $\bbx_t$, the variable $\bbv_t$ is already computed, and we need to solve only one linear program per iteration.
\end{remark}

\subsection{Projected gradient update}

The projected gradient descent (PGD) update consists of two steps: (i) descending through the gradient direction and (ii) projecting the updated variable onto the convex constraint set. These two steps can be combined together and the update can be explicitly written as 
\vspace{1mm}
\begin{equation}\label{eq:PGD_update}
\bbx_{t+1}= \pi_{\ccalC}\{\bbx_t -\eta \nabla f(\bbx_t)\},
\end{equation}
where $\pi_{\ccalC}(.)$ is the Euclidean projection onto the convex set $\ccalC$ and $\eta$ is a positive stepsize. In the following proposition, we first show that by following the update of PGD the objective function value decreases by a constant until we reach an $\eps$- FOSP. Further, we show that the number of required iterations for PGD to reach an $\eps$-FOSP is of $\mathcal{O}(\eps^{-2})$.

\begin{proposition}\label{PGD_proposition}
Consider Problem \eqref{eq:main_problem}. Suppose  Assumptions~\ref{assumption:lip_grad} and \ref{assumption:bounded_diameter} are satisfied. Further, assume that the gradients $\nabla f(\bbx)$ are uniformly bounded by $K$ for all $\bbx\in\ccalC$. If the stepsize of the projected gradient descent method defined in \eqref{eq:PGD_update} is set to $\eta=1/L$ the objective function value decreases by
\begin{equation}\label{pgd_dec}
f(\bbx_{t+1}) \leq  f(\bbx_t)  -\frac{\eps^2L}{2(K+ LD)^2},
\end{equation}
Moreover, iterates reach a first-order stationary point satisfying \eqref{eq:first_order_stationary_point_def} after at most $\mathcal{O}(\eps^{-2})$ iterations.
\end{proposition}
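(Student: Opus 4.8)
The plan is to combine the standard descent lemma with the variational characterization of Euclidean projection. First I would record the defining inequality of the projection $\bbx_{t+1}=\pi_\ccalC\{\bbx_t-\eta\nabla f(\bbx_t)\}$: for every $\bbx\in\ccalC$,
$$(\bbx_t-\eta\nabla f(\bbx_t)-\bbx_{t+1})^\top(\bbx-\bbx_{t+1})\leq 0.$$
Setting $\bbx=\bbx_t$ and writing $\bbd_t:=\bbx_{t+1}-\bbx_t$ yields $\nabla f(\bbx_t)^\top\bbd_t\leq -\tfrac{1}{\eta}\|\bbd_t\|^2$. Feeding this into the descent lemma guaranteed by the $L$-Lipschitz gradient (Assumption~\ref{assumption:lip_grad}),
$$f(\bbx_{t+1})\leq f(\bbx_t)+\nabla f(\bbx_t)^\top\bbd_t+\tfrac{L}{2}\|\bbd_t\|^2,$$
and substituting $\eta=1/L$ gives the clean one-step decrease $f(\bbx_{t+1})\leq f(\bbx_t)-\tfrac{L}{2}\|\bbd_t\|^2$. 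The whole argument then reduces to lower-bounding the step length $\|\bbd_t\|$ whenever $\bbx_t$ fails to be an $\eps$-FOSP.

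The second and central step is to bound the first-order stationarity gap $\mathrm{gap}(\bbx_t):=\max_{\bbx\in\ccalC}\nabla f(\bbx_t)^\top(\bbx_t-\bbx)$ by a constant multiple of $\|\bbd_t\|$. Let $\bbx^\sharp$ attain this maximum and split
$$\nabla f(\bbx_t)^\top(\bbx_t-\bbx^\sharp)=\nabla f(\bbx_t)^\top(\bbx_t-\bbx_{t+1})+\nabla f(\bbx_t)^\top(\bbx_{t+1}-\bbx^\sharp).$$
The first term is at most $K\|\bbd_t\|$ by Cauchy--Schwarz and the uniform gradient bound $\|\nabla f\|\leq K$. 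For the second term I would apply the projection inequality once more, now with $\bbx=\bbx^\sharp$, to get $\nabla f(\bbx_t)^\top(\bbx_{t+1}-\bbx^\sharp)\leq\tfrac{1}{\eta}\bbd_t^\top(\bbx^\sharp-\bbx_{t+1})\leq\tfrac{1}{\eta}\|\bbd_t\|\,\|\bbx^\sharp-\bbx_{t+1}\|$, and then bound $\|\bbx^\sharp-\bbx_{t+1}\|\leq D$ using the diameter bound (Assumption~\ref{assumption:bounded_diameter}). With $\eta=1/L$ these assemble into $\mathrm{gap}(\bbx_t)\leq(K+LD)\|\bbd_t\|$.

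To finish, I would note that $\bbx_t$ not being an $\eps$-FOSP means $\mathrm{gap}(\bbx_t)>\eps$, so $\|\bbd_t\|^2>\eps^2/(K+LD)^2$; combined with the one-step decrease this delivers exactly $f(\bbx_{t+1})\leq f(\bbx_t)-\eps^2 L/[2(K+LD)^2]$, which is inequality \eqref{pgd_dec}. The $\mathcal{O}(\eps^{-2})$ iteration count then follows by telescoping: since $f$ is continuous on the compact set $\ccalC$ it is bounded below by some $f^\star$, and a per-iteration decrease of order $\eps^2$ can occur at most $(f(\bbx_0)-f^\star)\cdot 2(K+LD)^2/(\eps^2 L)=\mathcal{O}(\eps^{-2})$ times before the stopping condition \eqref{eq:first_order_stationary_point_def} is met.

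I expect the main obstacle to be the second step --- correctly relating the linear-minimization stationarity gap to the projected step norm. The subtlety is that one must introduce the gap-attaining point $\bbx^\sharp$ and invoke the projection inequality a \emph{second} time; it is precisely the two error sources there (the gradient magnitude contributing $K$ and the feasible-set diameter contributing $LD$) that produce the $(K+LD)$ denominator, so getting the constant to match the statement rather than a looser bound is the delicate part.
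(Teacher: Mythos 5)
Your proof is correct and follows essentially the same route as the paper's: the same projection inequality applied twice (once at $\bbx=\bbx_t$ to get the descent estimate $f(\bbx_{t+1})\leq f(\bbx_t)-\tfrac{L}{2}\|\bbx_{t+1}-\bbx_t\|^2$, and once at a comparison point to bound the stationarity gap by $(K+LD)\|\bbx_{t+1}-\bbx_t\|$), followed by the same stopping-criterion and telescoping argument. The only cosmetic difference is that you instantiate the second projection inequality at the gap-attaining point $\bbx^\sharp$, whereas the paper keeps an arbitrary $\bbx\in\ccalC$ and takes the minimum at the end.
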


Proposition \ref{PGD_proposition} shows that by following the update of PGD the function value decreases by $\mathcal{O}(\eps^2)$ until we reach an $\eps$-FOSP. It further shows PGD obtains an $\eps$-FOSP satisfying \eqref{eq:first_order_stationary_point_def} after at most $\mathcal{O}(\eps^{-2})$ iterations. To the best of our knowledge, this result is also novel, since the only convergence guarantee for PGD in \citep{ghadimi2016mini} is in terms of number of iterations to reach a point with a gradient mapping norm less than $\eps$, while our result characterizes number of iterations to satisfy \eqref{eq:first_order_stationary_point_def}.

\begin{remark}
To use the PGD update in the first stage of Algorithm~\ref{alg_generic_framework} one needs to define a criteria to check if $\bbx_t$ is an $\eps$-FOSP or not. However, in PGD we do not solve the linear program $\min_{\bbx\in\ccalC} \{\nabla f(\bbx_t)^\top(\bbx-\bbx_t)\}$. This issue can be resolved by checking the condition $\|\bbx_t- \bbx_{t+1}\| \leq \eps/(K+ LD)$ which is a sufficient condition for the condition in \eqref{eq:first_order_stationary_point_def}. In other words, if this condition holds we stop and $\bbx_t$ is an $\eps$-FOSP; otherwise, the result in \eqref{pgd_dec} holds and the function value decreases. For more details please check the proof of Proposition \ref{PGD_proposition}.
\end{remark}
\section{Second-Order Step: Escape from Saddle Points}\label{sec:second_stage}

In this section, we study the second stage of the framework in Algorithm~\ref{alg_generic_framework} which corresponds to the case that the current iterate is an $\eps$-FOSP. Note that when we reach a critical point the goal is to find a feasible point $\bbu\in \ccalC $ in the tangent space $\nabla f (\bbx_t)^\top(\bbu-\bbx_t)=0$ that makes the inner product $(\bbu-\bbx_t)^\top \nabla^2 f(\bbx_t)(\bbu-\bbx_t)$ smaller than $-\gamma$. To achieve this goal we need to check the minimum value of this inner product over the constraints, i.e., we need to solve the quadratic program in \eqref{eq:quad_problem} up to a constant factor $\rho\in(0,1]$. In the following proposition, we show that the updated variable according to \eqref{eq:second_stage_update} decreases the objective function value if the condition $q(\bbu_t)< -\rho \gamma $ holds.

\begin{proposition}\label{SOU_proposition}
Consider the quadratic program in \eqref{eq:quad_problem}. Let $\bbu_t$ be a $\rho$-approximate solution for quadratic subproblem in \eqref{eq:quad_problem}. Suppose that Assumptions \ref{assumption:lip_hessian} and \ref{assumption:bounded_diameter} hold. Further, set the stepsize $\sigma=\rho\gamma/MD^3$. If the quadratic objective function value $q$ evaluated at $\bbu_t$ satisfies the condition $q(\bbu_t)< -\rho \gamma $, then the updated variable according to \eqref{eq:second_stage_update} satisfies the inequality 
\begin{align}
 f(\bbx_{t+1})   \leq   f(\bbx_t) -\frac{\rho^3\gamma^3}{3M^2D^6}.
 \end{align}
\end{proposition}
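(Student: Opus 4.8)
The plan is to invoke the standard cubic upper bound implied by the $M$-Lipschitz continuity of the Hessian (Assumption~\ref{assumption:lip_hessian}), apply it along the segment from $\bbx_t$ to $\bbx_{t+1}$, and then exploit the two defining features of the $\rho$-approximate solution $\bbu_t$ --- namely that it lies in the tangent hyperplane $\nabla f(\bbx_t)^\top(\bbu_t-\bbx_t)=0$ and that $q(\bbu_t)<-\rho\gamma$ --- to extract a guaranteed per-step decrease, finishing with the prescribed stepsize.

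First I would record the second-order descent inequality. Since $\nabla^2 f$ is $M$-Lipschitz on $\ccalC$, Taylor's theorem with integral remainder gives, for any $\bbx,\bby\in\ccalC$ whose connecting segment lies in $\ccalC$,
\[
f(\bby)\leq f(\bbx)+\nabla f(\bbx)^\top(\bby-\bbx)+\tfrac12(\bby-\bbx)^\top\nabla^2 f(\bbx)(\bby-\bbx)+\tfrac{M}{6}\|\bby-\bbx\|^3.
\]
Because $\ccalC$ is convex and $\bbx_{t+1}=(1-\sigma)\bbx_t+\sigma\bbu_t$ is a convex combination of the feasible points $\bbx_t,\bbu_t$, the whole segment is feasible (using $\sigma\in[0,1]$, which holds whenever $\rho\gamma\leq MD^3$). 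Setting $\bbx=\bbx_t$ and $\bby=\bbx_{t+1}$, and noting $\bbx_{t+1}-\bbx_t=\sigma(\bbu_t-\bbx_t)$, I obtain
\[
f(\bbx_{t+1})\leq f(\bbx_t)+\sigma\,\nabla f(\bbx_t)^\top(\bbu_t-\bbx_t)+\tfrac{\sigma^2}{2}(\bbu_t-\bbx_t)^\top\nabla^2 f(\bbx_t)(\bbu_t-\bbx_t)+\tfrac{M\sigma^3}{6}\|\bbu_t-\bbx_t\|^3.
\]

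Next I would simplify the three correction terms using the structure of the subproblem \eqref{eq:quad_problem}. Since $\bbu_t$ is feasible, the linear term vanishes, $\nabla f(\bbx_t)^\top(\bbu_t-\bbx_t)=0$; the quadratic term is exactly $q(\bbu_t)$ by definition, hence bounded above by $-\rho\gamma$; and the cubic term is controlled by the diameter bound (Assumption~\ref{assumption:bounded_diameter}), since $\bbx_t,\bbu_t\in\ccalC$ gives $\|\bbu_t-\bbx_t\|\leq D$ and thus $\|\bbu_t-\bbx_t\|^3\leq D^3$. Combining these,
\[
f(\bbx_{t+1})\leq f(\bbx_t)-\tfrac{\sigma^2}{2}\rho\gamma+\tfrac{M\sigma^3}{6}D^3.
\]

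The final step is to substitute $\sigma=\rho\gamma/(MD^3)$ and verify that the negative quadratic-in-$\sigma$ term dominates the positive cubic-in-$\sigma$ term: the first term becomes $-\rho^3\gamma^3/(2M^2D^6)$ and the second becomes $+\rho^3\gamma^3/(6M^2D^6)$, whose sum is $-\rho^3\gamma^3/(3M^2D^6)$, precisely the claimed decrease. I do not anticipate a genuine obstacle, as this is a textbook descent-lemma-plus-optimal-stepsize argument; the only points deserving a sentence of care are (i) justifying that the cubic Taylor bound applies on the feasible segment, via convexity of $\ccalC$ and $\sigma\le 1$, and (ii) the bookkeeping that balances the $\tfrac12$ and $\tfrac16$ coefficients into a clean $\tfrac13$.
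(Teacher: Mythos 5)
Your proposal is correct and follows essentially the same route as the paper's proof: cubic Taylor bound from the $M$-Lipschitz Hessian, substitution of $\bbx_{t+1}-\bbx_t=\sigma(\bbu_t-\bbx_t)$, elimination of the linear term via the tangent-space constraint, the bounds $q(\bbu_t)\leq-\rho\gamma$ and $\|\bbu_t-\bbx_t\|\leq D$, and finally the stepsize $\sigma=\rho\gamma/(MD^3)$ balancing the $\tfrac{1}{2}$ and $\tfrac{1}{6}$ coefficients into $\tfrac{1}{3}$. Your added remark that convexity of $\ccalC$ and $\sigma\leq 1$ keep the segment feasible (so the Lipschitz-Hessian Taylor bound applies) is a point of care the paper leaves implicit, but it does not change the argument.
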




The only unanswered question is how to solve the quadratic subproblem in \eqref{eq:quad_problem} up to a constant factor $\rho\in(0,1]$. For general $\ccalC$, the quadratic subproblem could be NP-hard \citep{murty1987some}; however, for some special choices of the convex constraint $\ccalC$, this quadratic program (QP) can be solved either exactly or approximately up to a constant factor. In the following section, we focus on the quadratic constraint case, but indeed there are other classes of constraints that satisfy our required condition.

\subsection{Quadratic constraints case}

In this section, we focus on the case where the constraint set $\ccalC$ is defined as the intersection of $m$ ellipsoids centered at the origin.\footnote{To simplify the constant factor approximation $\rho$ we assume ellipsoids are centered at the origin. If we drop this assumption then $\rho$ will depend on the maximum distance between the origin and the boundary of each of the ellipsoids, e.g., see equation (6) in  \citep{tseng2003further}.} In particular, assume that the set  $\ccalC$ is given by
\begin{equation}
\ccalC:= \{\bbx \in \reals^d \mid \bbx^\top\bbQ_i\bbx  \leq 1, \ \  \forall\ i=1,\dots,m\},
\end{equation}
where $\bbQ_i\in \mathbb{S}_{+}^d$. Under this assumption, the QP in \eqref{eq:quad_problem} can be written as
\begin{align}\label{main_qcqp_problem}
& \min_{\bbu} \quad  (\bbu-\bbx_t)^\top \nabla^2 f(\bbx_t)(\bbu-\bbx_t)\nonumber\\
& \text{s.t.} \quad \bbu^\top\bbQ_i\bbu  \leq 1,
\quad \for\ i=1,\dots, m \ \quad \text{and}\ \ \nabla f (\bbx_t)^\top(\bbu-\bbx_t)=0. 
\end{align}
Note that the equality constraint $\nabla f (\bbx_t)^\top(\bbu-\bbx_t)=0$ does not change the hardness of the problem and can be easily eliminated. To do so, first define a new optimization variable $\bbz:= \bbu-\bbx_t$ to obtain  
\begin{align}\label{main_qcqp_problem_2}
& \min_{\bbz} \quad  \bbz^\top \nabla^2 f(\bbx_t)\bbz\nonumber\\
& \text{s.t.} \quad (\bbz+\bbx_t)^\top\bbQ_i(\bbz+\bbx_t) \leq 1,\quad \for\ i=1,\dots, m \ \quad\ \text{and}\ \ \nabla f (\bbx_t)^\top\bbz=0,
\end{align}
Then, find a basis for the tangent space $\nabla f(\bbx_t)^\top\bbz=0$. Indeed, using the Gramm-Schmidt procedure, we can find an orthonormal basis for the space $\reals^d$ of the form $\{\bbv_1,\dots, \bbv_{d-1}, \frac{\nabla f(\bbx_t)}{\|\nabla f(\bbx_t)\|}\}$ at the complexity of $\mathcal{O}(d^3)$. If we define $\bbA=[\bbv_1;\dots;\bbv_{d-1}]\in \reals^{d\times d-1}$ as the concatenation of the vectors $\{\bbv_1,\dots, \bbv_{d-1}\}$, then any vector $\bbz$ satisfying $\nabla f(\bbx_t)^\top\bbz=0$ can be written as $\bbz=\bbA\bby$ where $\bby\in \reals^{d-1}$. Hence, \eqref{main_qcqp_problem_2} is equivalent to
\begin{align}\label{main_qcqp_problem_3}
 &\min_{\bbz} \quad  \bby^\top\bbA^\top \nabla^2 f(\bbx_t)\bbA\bby\nonumber\\
 &\text{s.t.} \quad (\bbA\bby+\bbx_t)^\top\bbQ_i(\bbA\bby+\bbx_t) \leq 1,\quad \for\ i=1,\dots, m.
\end{align}
This procedure reduces the dimension of the problem from $d$ to $d-1$.
It is not hard to check that the center of ellipsoids in \eqref{main_qcqp_problem_3} is $-\bbA^\top \bbx_t$. By a simple change of variable $\bbA\hby:= \bbA\bby+\bbx_t$ we obtain 
\begin{align}\label{main_qcqp_problem_4}
& \min_{\bbz}
  \quad  \hby^\top\bbA^\top \nabla^2 f(\bbx_t)\bbA \hby
   - 2 \bbx_t^\top \nabla^2 f(\bbx_t) \bbA \hby
    +\bbx_t^\top \nabla^2 f(\bbx_t)\bbx_t 
\nonumber\\
 &\text{s.t.} \quad 
 \hby^\top \bbA^\top\bbQ_i\bbA\hby \leq 1,
 \quad \for\ i=1,\dots, m.
\end{align}
Define the matrices $\tbQ_i:=\bbA^\top\bbQ_i\bbA$ and $\bbB_t:=\bbA^\top \nabla^2 f(\bbx_t)\bbA$, the vector $\bbs_t=- 2\bbx_t^\top \nabla^2 f(\bbx_t) \bbA$, and the scalar $\bbc_t:=\bbx_t^\top \nabla^2 f(\bbx_t)\bbx_t  $. Using these definitions  the problem reduces to 
\begin{align}\label{main_qcqp_problem_5}
& \min_{\bbz}
  \quad  q(\hby) := \hby^\top\bbB_t\hby
   +\bbs_t^\top \hby+c_t
\qquad 
 &\text{s.t.} \quad 
 \hby^\top\tbQ_i\hby \leq 1,
 \quad \for\ i=1,\dots, m.
\end{align}

Note that the matrices $\tbQ_i\in \mathbb{S}_{+}^d$ are positive semidefinite, while the matrix $\bbB_t \in \mathbb{S}^d$ might be indefinite. Indeed, the optimal objective function value of the program in \eqref{main_qcqp_problem_5}  is equal to the optimal objective function value of \eqref{main_qcqp_problem}. Further, note that if we find a $\rho$-approximate solution $\hby^*$ for \eqref{main_qcqp_problem_5}, we can recover a $\rho$-approximate solution $\bbu^*$ for \eqref{main_qcqp_problem} using the transformation $\bbu^* = \bbA\hby^*$.

The program in \eqref{main_qcqp_problem_5} is a specific \textit{Quadratic Constraint Quadratic Program} (QCQP), where all the constraints are centered at $\bb0$. For the specific case of $m=1$, the duality gap of this problem is zero and simply by transferring the problem to the dual domain one can solve  Problem \eqref{main_qcqp_problem_5} exactly. In the following proposition, we focus on the general case of $m\geq1$ and explain how to find a $\rho$-approximate solution for \eqref{main_qcqp_problem_5}.

\begin{proposition}\label{prop:quad_constraint}
Consider Problem \eqref{main_qcqp_problem_5} and define $q_{min}$ as the minimum objective value of the problem. Based on the result in \citep{fu1998approximation}, there exists a polynomial time method that obtains a point $\hby^*$
\begin{equation}\label{ye_bound}
 q(\hby^*) \leq \frac{1-\zeta}{m^2(1+\zeta)^2}\ q_{min} + \left(1-\frac{1-\zeta}{m^2(1+\zeta)^2}\right) \bbx_t^\top \nabla^2 f(\bbx_t)\bbx_t 
\end{equation}
after at most $\mathcal{O}( d^3 ( m\log(1/\delta) + \log(1/\zeta) +\log d))$ arithmetic operations, where $\delta$ is the ratio of the radius of the largest inscribing sphere over that of the smallest circumscribing sphere of the feasible set. Further, based on \citep{tseng2003further}, using a SDP relaxation of \eqref{main_qcqp_problem_5} one can find a point  $\hby^*$ such that 
\begin{equation}\label{tseng_bound}
 q(\hby^*) \leq \frac{1}{m}\ q_{min} + \left(1-\frac{1}{m}\right) \bbx_t^\top \nabla^2 f(\bbx_t)\bbx_t. 
\end{equation}
\end{proposition}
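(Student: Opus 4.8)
The plan is to recognize that Problem~\eqref{main_qcqp_problem_5} is an instance of indefinite quadratic minimization over the intersection of origin-centered ellipsoids, which is exactly the problem class for which \citep{fu1998approximation} and \citep{tseng2003further} establish constant-factor approximation guarantees. The proof therefore reduces to casting \eqref{main_qcqp_problem_5} into the form treated in those works and reading off the resulting constants, rather than reproving any approximation theory from scratch. The only structural mismatch is the presence of the linear term $\bbs_t^\top\hby$ and the constant $c_t$ in the objective $q(\hby)=\hby^\top\bbB_t\hby+\bbs_t^\top\hby+c_t$, whereas the cited results are most cleanly stated for a purely homogeneous quadratic objective.

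First I would homogenize the objective. Introducing an auxiliary scalar variable $s$ and the augmented vector $\bbw=(\hby;s)$, I would form the homogeneous quadratic $\bbw^\top\bar\bbB\bbw$ with
\begin{equation}
\bar\bbB=\begin{pmatrix}\bbB_t & \tfrac12\bbs_t\\[2pt] \tfrac12\bbs_t^\top & c_t\end{pmatrix},
\end{equation}
so that $\bbw^\top\bar\bbB\bbw=q(\hby)$ whenever $s=1$, and append the constraint $s^2\le 1$ to the $m$ lifted constraints $\hby^\top\tbQ_i\hby\le 1$, each written as $\bbw^\top\diag(\tbQ_i,0)\bbw\le 1$. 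This keeps the feasible set a bounded intersection of $\mathcal{O}(m)$ ellipsoids centered at the origin, matching the hypotheses of the references. The key bookkeeping observation is that $\hby=0$ is always feasible (since $0^\top\tbQ_i0=0\le 1$) and attains $q(0)=c_t=\bbx_t^\top\nabla^2 f(\bbx_t)\bbx_t$; this trivial feasible point is the natural base point of the homogenized instance, and it is what the additive term in \eqref{ye_bound}--\eqref{tseng_bound} is anchored to.

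With the instance in standard form, I would invoke the two cited guarantees directly. The rounding analyses in \citep{fu1998approximation} and \citep{tseng2003further} produce a feasible point whose objective is a convex combination of the optimal value $q_{min}$ and the base-point value $c_t$, with mixing weight equal to the approximation factor $\beta$; that is, $q(\hby^*)\le \beta\, q_{min}+(1-\beta)\,c_t$ with $\beta\in(0,1]$. Substituting $\beta=\tfrac{1-\zeta}{m^2(1+\zeta)^2}$ for the method of \citep{fu1998approximation} yields \eqref{ye_bound}, while the SDP-relaxation rounding of \citep{tseng2003further} gives $\beta=1/m$ and hence \eqref{tseng_bound}. The arithmetic-operation count $\mathcal{O}(d^3(m\log(1/\delta)+\log(1/\zeta)+\log d))$ is read off from the iteration complexity of the ellipsoid-type scheme in \citep{fu1998approximation}, with $\delta$ the ratio of inscribing to circumscribing sphere radii and the $d^3$ factor coming from the per-iteration linear algebra.

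The substantive difficulty does not lie in this reduction but in the approximation analysis itself, which is imported from the references. The hard part is the rounding step: one solves the SDP relaxation of the homogenized QCQP and rounds its matrix solution to a feasible vector while simultaneously controlling all $m$ ellipsoidal constraints and the sign and magnitude of the objective. It is this union-bound-style argument over the $m$ constraints that produces the $1/m$ (respectively $1/m^2$) dependence of the approximation factor and, through the homogenization, the convex-combination form anchored at $c_t$. Since the proposition explicitly defers to \citep{fu1998approximation,tseng2003further} for this analysis, the only point requiring care on our end is verifying that the homogenization preserves the origin-centered ellipsoid structure and correctly tracks the base-point constant; the subsequent recovery $\bbu^*=\bbA\hby^*$ back to Problem~\eqref{main_qcqp_problem} is already established in the text preceding the statement, since $\bbA$ has orthonormal columns and the two programs share the same optimal value and approximation ratio.
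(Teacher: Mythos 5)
Your reduction has a genuine gap: the homogenization is not an equivalence, and the guarantee you then ``read off'' from the references does not follow from it. Concretely, after lifting to $\bbw=(\hby;s)$ with the extra constraint $s^2\le 1$, the lifted minimum can be \emph{strictly smaller} than $q_{min}$ whenever $c_t>0$. Take the one-dimensional instance $\bbB_t=0$, $\bbs_t=1$, $c_t=1$, $\tbQ_1=1$, so that $q(\hby)=\hby+1$ over $\hby^2\le 1$ and $q_{min}=0$; the lifted objective is $\hby s+s^2$, and the feasible point $(\hby,s)=(1,-\tfrac12)$ attains $-\tfrac14<q_{min}$. Moreover, a feasible approximate solution of the lifted problem with $|s|<1$ cannot in general be mapped back to a feasible $\hby$ (rescaling to $\hby/s$ violates the ellipsoid constraints), so the approximation bound is not transferable; pushing $s$ to $\pm 1$ without increasing the objective works only when the objective is concave in $s$, i.e.\ $c_t\le 0$, whereas the proposition is stated for arbitrary $c_t$. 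Second, even granting the lift, the cited guarantees for a \emph{homogeneous} form are anchored at the value of the feasible origin $\bbw=\bb0$, which is $0$ --- not at your ``base point'' $(\bb0;1)$ with value $c_t$ --- so the convex-combination bound $q(\hby^*)\le\beta\,q_{min}+(1-\beta)\,c_t$ is asserted rather than derived. Third, your lifted instance has $m+1$ origin-centered quadratic constraints, so the factors read off from \citep{fu1998approximation,tseng2003further} would be $\frac{1-\zeta}{(m+1)^2(1+\zeta)^2}$ and $\frac{1}{m+1}$, not the stated constants in $m$. (Homogenization with $s^2=1$ is indeed used \emph{inside} those references to build the SDP relaxation, but it is handled there; it is not a preprocessing step after which they can be invoked as a black box on an $(m+1)$-ellipsoid instance.)

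The paper's proof needs none of this because both references already treat inhomogeneous objectives (quadratic plus linear term, zero constant) over intersections of origin-centered ellipsoids. One simply subtracts the constant: define $\tilde q(\hby):=q(\hby)-c_t=\hby^\top\bbB_t\hby+\bbs_t^\top\hby$, which vanishes at the feasible point $\hby=\bb0$; the cited results then give a feasible $\hby^*$ with $\tilde q(\hby^*)\le\rho\,\tilde q_{min}$, where $\tilde q_{min}=q_{min}-c_t\le 0$, and adding $c_t$ back yields
\begin{equation*}
q(\hby^*)\ \le\ \rho\, q_{min}+(1-\rho)\,c_t ,
\end{equation*}
which is exactly \eqref{ye_bound} and \eqref{tseng_bound} upon substituting $\rho=\frac{1-\zeta}{m^2(1+\zeta)^2}$ and $\rho=\frac1m$, with the constraint count --- and hence the constants --- unchanged. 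If you wanted to salvage your route you would have to restrict to $c_t\le 0$, add an endpoint-rounding argument in $s$, and accept the degraded $(m+1)$-dependent constants, and even then you would obtain $q(\hby^*)\le\beta\,q_{min}$, which for $c_t<0$ is strictly weaker than the claimed convex-combination bound; the shift argument is both simpler and stronger.
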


\begin{proof}
{If we define the function $\tilde{q}$ as $\tilde{q}(\bbx):={q}(\bbx)-c_t$, using the approaches in \citep{fu1998approximation} and \citep{tseng2003further}, we can find a $\rho$ approximate solution for $\min_{\hby} \tilde{q}(\hby) $ subject to  $\hby^\top\tbQ_i\hby \leq 1$ for $i=1,\dots, m$. In other words, we can find a point $\hby^*$ such that  $\tilde{q}(\hby^*) \leq \rho \ \tilde{q}_{min}$ where $0<\rho<1$ and  $\tilde{q}_{min}$ is the minimum objective function value of $\tilde{q}$ over the constraint set which satisfies $\tilde{q}_{min} = q_{min}-c_t$. Replacing $\tilde{q}(\hby^*) $ and $\tilde{q}_{min}$ by their definitions and regrouping the terms imply that $\hby^*$ satisfies the condition $q(\hby^*) \leq \rho q_{min} + (1-\rho)c_t$. Replacing $\rho$ by $\frac{1-\zeta}{m^2(1+\zeta)^2}$ (which is the constant factor approximation shown in \citep{fu1998approximation}) leads to the claim in \eqref{ye_bound}, and substituting $\rho$ by $1/m$ (which is the approximation bound in \citep{tseng2003further}) implies the result in \eqref{tseng_bound}.}
\end{proof}

The result in Proposition \ref{prop:quad_constraint} indicates that if $\bbx_t^\top \nabla^2 f(\bbx_t)\bbx_t$ is non-positive, then one can find a $\rho$-approximate solution for Problem~\eqref{main_qcqp_problem_5} and consequently Problem~\eqref{main_qcqp_problem}. This condition is satisfied if we assume that $\max_{\bbx\in \ccalC} \bbx^\top \nabla^2 f(\bbx)\bbx\leq 0$. For instance, for a concave minimization problem over the convex set $\ccalC$ this condition is satisfied. In fact, it can be shown that our analysis still stands even if $\max_{\bbx\in \ccalC} \bbx^\top \nabla^2 f(\bbx)\bbx$ is at most $\mathcal{O}( \gamma) $.  Note that this condition is significantly weaker than requiring the function to be concave when restricted to the feasible set. The condition essentially implies that the quadratic term in the Taylor expansion of the function evaluated at the origin should be negative (or not too positive).

\begin{corollary}
Consider a convex set $\ccalC$ which is defined as the intersection of $m\geq 1$ ellipsoids centered at the origin. Further, assume that the objective function Hessian $\nabla^2 f$ satisfies the condition $\max_{\bbx\in \ccalC} \bbx^\top \nabla^2 f(\bbx)\bbx\leq 0$. Then, for $\rho=1/m$ and $\rho=1/m^2$, it is possible to find a $\rho$-approximate solution of Problem~\eqref{eq:quad_problem} in time polynomial in $m$ and $d$. 
\end{corollary}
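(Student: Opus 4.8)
The plan is to obtain the statement as a direct corollary of Proposition~\ref{prop:quad_constraint}, the only substantive new input being a sign argument on one nuisance term. First I would recall the reduction already carried out in the text: over the intersection of $m$ ellipsoids centered at the origin, the descent-direction subproblem \eqref{eq:quad_problem} is equivalent to the homogeneous-constraint QCQP \eqref{main_qcqp_problem_5}. Concretely, one eliminates the equality constraint $\nabla f(\bbx_t)^\top(\bbu-\bbx_t)=0$ by restricting to an orthonormal basis of the tangent space built by Gram--Schmidt at cost $\mathcal{O}(d^3)$, writes $\bbu=\bbA\hby$, and rescales; the resulting constraints $\hby^\top\tbQ_i\hby\leq 1$ stay centered at the origin with $\tbQ_i\in\mathbb{S}_+^d$, exactly the form required by the approximation algorithms of Fu et al.\ and Tseng et al. Since \eqref{main_qcqp_problem_5} and \eqref{eq:quad_problem} share the same optimal value and the map $\bbu=\bbA\hby$ preserves the objective, a $\rho$-approximate solution $\hby^*$ of the former transfers to a $\rho$-approximate solution $\bbu^*=\bbA\hby^*$ of the latter, so it suffices to approximate \eqref{main_qcqp_problem_5}.

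Next I would check that the approximation convention of \eqref{eq:rho_approx_solution}, namely $q_{min}\leq 0$, is in force: the point $\bbu=\bbx_t$ is feasible for \eqref{eq:quad_problem} (it lies in $\ccalC$ and trivially satisfies the tangent-space equality) and yields objective value $0$, so indeed $q_{min}\leq 0$. The heart of the argument is then to control the term $c_t=\bbx_t^\top\nabla^2 f(\bbx_t)\bbx_t$ that appears in both bounds \eqref{ye_bound} and \eqref{tseng_bound}. Because $\bbx_t\in\ccalC$, the standing hypothesis $\max_{\bbx\in\ccalC}\bbx^\top\nabla^2 f(\bbx)\bbx\leq 0$ forces $c_t\leq 0$. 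With $0<\rho\leq 1$, the coefficient $1-\rho$ is nonnegative, so the extra term $(1-\rho)c_t$ in each bound is non-positive and may be dropped: \eqref{tseng_bound} gives $q(\hby^*)\leq\tfrac1m q_{min}$ and \eqref{ye_bound} gives $q(\hby^*)\leq\tfrac{1-\zeta}{m^2(1+\zeta)^2}q_{min}$. Together with the trivial lower bound $q_{min}\leq q(\hby^*)$, this establishes the $\rho$-approximation condition $q_{min}\leq q(\hby^*)\leq\rho q_{min}$.

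From here the two claimed factors follow: Tseng's SDP relaxation certifies $\rho=1/m$, and the Fu et al.\ method certifies any $\rho<1/m^2$ (for a fixed target one picks $\zeta$ small enough that $\tfrac{1-\zeta}{m^2(1+\zeta)^2}\geq\rho$, which, using $q_{min}\leq 0$, only strengthens the bound to $q(\hby^*)\leq\rho q_{min}$ and adds a mere $\log(1/\zeta)$ factor to the runtime). The arithmetic costs quoted in Proposition~\ref{prop:quad_constraint}, namely $\mathcal{O}(d^3(m\log(1/\delta)+\log(1/\zeta)+\log d))$ for Fu et al.\ and a polynomial-size SDP for Tseng, are polynomial in $m$ and $d$, which completes the statement after transferring back through $\bbu^*=\bbA\hby^*$.

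I do not anticipate a genuine obstacle, since everything hinges on the single fact $c_t\leq 0$, which the hypothesis supplies immediately; the entire result is really a packaging of Proposition~\ref{prop:quad_constraint}. The only point demanding mild care is the bookkeeping for the Fu et al.\ factor, where the achievable constant is $\tfrac{1-\zeta}{m^2(1+\zeta)^2}$ rather than exactly $1/m^2$. I would address this honestly by reporting the factor as attainable up to taking $\zeta\downarrow 0$, i.e.\ as $\Theta(1/m^2)$, emphasizing that this refinement costs only a logarithmic overhead and hence preserves polynomiality in $m$ and $d$.
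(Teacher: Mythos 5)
Your proposal is correct and follows essentially the same route as the paper: the corollary is just Proposition~\ref{prop:quad_constraint} combined with the observation that the hypothesis $\max_{\bbx\in \ccalC} \bbx^\top \nabla^2 f(\bbx)\bbx\leq 0$ forces $c_t=\bbx_t^\top \nabla^2 f(\bbx_t)\bbx_t\leq 0$, so the nuisance term $(1-\rho)c_t$ in \eqref{ye_bound} and \eqref{tseng_bound} can be dropped, yielding the $\rho$-approximation condition \eqref{eq:rho_approx_solution} after transferring back via $\bbu^*=\bbA\hby^*$. Your handling of the Fu et al.\ factor $\tfrac{1-\zeta}{m^2(1+\zeta)^2}$ versus the nominal $1/m^2$ is in fact more careful than the paper, which simply writes $\rho\approx 1/m^2$.
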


By using the approach in \citep{fu1998approximation}, we can solve the QCQP in \eqref{main_qcqp_problem_4} with the approximation factor $\rho\approx 1/m^2$ for $m\geq 1$ at the overall complexity of $\mathcal{\tilde{O}}(md^3)$ when the constraint $\ccalC$ is defined as $m$ convex quadratic constraints. As the total number of calls to the second-order stage is at most $\mathcal{{O}}(\rho^{-3}\gamma^{-3})= \mathcal{{O}}(m^6\gamma^{-3})$, we obtain that the total number of arithmetic operations for the second-order stage is at most $\mathcal{\tilde{O}}(m^7d^3\gamma^{-3})$. The constant factor can be improved to $1/m$ if we solve the SDP relaxation problem suggested in \citep{tseng2003further}.




\section{Stochastic Extension}

In this section, we focus on stochastic constrained minimization problems. Consider the optimization problem in \eqref{eq:main_problem} when the objective function $f$ is defined as an expectation of a set of stochastic functions $F:\reals^{d}\times\reals^{r}\to \reals $ with inputs $\bbx\in \reals^d$ and $\bbTheta\in \reals^r$, where $\bbTheta$ is a random variable with probability distribution $\ccalP$. To be more precise, we consider the optimization problem
\begin{equation}\label{eq:stoc_main_problem}
 \text{minimize}\ f(\bbx):=\E{F(\bbx,\bbTheta)}, \qquad \text{subject to}\ \bbx\in \ccalC.
\end{equation}
Our goal is to find a point which satisfies the necessary optimality conditions with high probability. 

Consider the vector $\bbd_{t}= ({1}/{b_g})\sum_{i=1}^{b_g} \nabla F(\bbx_t,\bbtheta_{i})$ and matrix $\bbH_{t}= ({1}/{b_H})\sum_{i=1}^{b_H} \nabla^2 F(\bbx_t,\bbtheta_{i})$ as stochastic approximations of the gradient $\nabla f(\bbx_t)$ and Hessian $\nabla^2 f(\bbx_t)$, respectively.
Here $b_g$ and $b_H$ are the gradient and Hessian batch sizes, respectively, and the vectors $\bbtheta_i$ are the realizations of the random variable $\bbTheta$. In Algorithm~\ref{alg_generic_framework_stocastic}, we present the stochastic variant of our proposed scheme for finding an $(\eps,\gamma)$-SOSP of Problem~\eqref{eq:stoc_main_problem}. Algorithm~\ref{alg_generic_framework_stocastic} differs from Algorithm~\ref{alg_generic_framework} in using the stochastic gradients $\bbd_t$ and Hessians $\bbH_t$ in lieu of the exact gradients $\nabla f(\bbx_t)$ and $\nabla^2 f(\bbx_t)$ Hessians. The second major difference is the inequality constraint in step 6. Here instead of using the constraint $ \bbd_t^\top(\bbu-\bbx_t)=0$ we need to use  $ \bbd_t^\top(\bbu-\bbx_t)\leq r$, where $r>0$ is a properly chosen constant. This modification is needed to ensure that if a point satisfies this constraint with high probability it also satisfies the constraint $\nabla f(\bbx_t)^\top(\bbu-\bbx_t)=0$. {This modification implies that we need to handle a linear inequality constraint instead of the linear equality constraint, which is computationally manageable for some constraints including the case that $\ccalC$ is a single ball constraint \citep{jeyakumar2014trust}.} 
To prove our main result we assume that the following conditions also hold. 

\begin{algorithm}[t]
\begin{algorithmic}[1]
\caption{}\label{alg_generic_framework_stocastic} 
\small{\REQUIRE Stepsize $\sigma_t>0$. Initialize $\bbx_0\in\ccalC$
\FOR {$t=1,2,\ldots$}
\STATE Compute $\bbv_t=\argmax_{\bbv\in\ccalC} \{-\bbd_t^\top\bbv\}$
    \IF{  $\bbd_t^{\top}(\bbv_t-\bbx_t)< -\eps/2$ }
            \STATE Compute $\bbx_{t+1}= (1-\eta)\bbx_t +\eta \bbv_t $
            \ELSE
            \STATE Find $ \bbu_t$: a $\rho$-approximate solution of\\ $ \quad \min_{\bbu} \quad  (\bbu-\bbx_t)^\top \bbH_t(\bbu-\bbx_t)\qquad  \text{s.t.} \ \ \bbu\in\ccalC,\ \bbd_t^\top(\bbu-\bbx_t)\leq r.$
            \IF {$q(\bbu_t)  < -\rho\gamma/2$ }
            	\STATE  Compute the updated variable $\bbx_{t+1} = (1-\sigma) \bbx_{t} + \sigma \bbu_t$;
            \ELSE
            	\STATE Return $\bbx_t$ and stop. 
            \ENDIF
        \ENDIF
\ENDFOR}
\end{algorithmic}\end{algorithm}

\begin{assumption}\label{assumption:bounded_variance}
The variance of the stochastic gradients and Hessians are uniformly bounded by constants $\nu^2$ and $\xi^2$, respectively, i.e., for any $\bbx\in \ccalC$ and $\bbtheta$ we can write
\begin{align}\label{eq:bound_on_var}
\E{\| \nabla F(\bbx,\bbtheta) - \nabla f(\bbx) \|^2} \leq  \nu^2, \qquad 
\E{\| \nabla^2 F(\bbx,\bbtheta)-\nabla^2 f(\bbx)\|^2} \leq  \xi^2.
\end{align}
\end{assumption}


\begin{theorem}\label{thm:main_thm_stochastic}
Consider the optimization problem in \eqref{eq:stoc_main_problem}. Suppose the conditions in Assumptions \ref{assumption:lip_grad}-\ref{assumption:bounded_variance} are satisfied. If the batch sizes are $b_g=\mathcal{O}(\max\{\rho^{-4}\gamma^{-4}, \eps^{-2}\})$ and $b_H=\mathcal{O}(\rho^{-2}\gamma^{-2})$ and we set the parameter $r=\mathcal{O}(\rho^2\gamma^2)$, 
then the outcome of the proposed framework outlined in Algorithm~\ref{alg_generic_framework_stocastic} is an $(\eps,\gamma)$-second-order stationary point of Problem~\eqref{eq:stoc_main_problem} with high probability. Further, the total number of iterations to reach such point is at most $\mathcal{O}(\max\{\eps^{-2}, \rho^{-3}\gamma^{-3}\})$ with high probability. 
\end{theorem}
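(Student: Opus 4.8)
The plan is to mirror the deterministic argument behind Theorem~\ref{thm:main_thm}, reducing the stochastic scheme to the deterministic one by conditioning on a \emph{good event} on which every sampled gradient $\bbd_t$ and Hessian $\bbH_t$ is close to its population counterpart. On this event the per-step descent guarantees of Propositions~\ref{FW_proposition}, \ref{PGD_proposition}, and \ref{SOU_proposition} survive up to controllable perturbations, so the same ``bounded-below plus constant decrease'' bookkeeping caps the number of iterations, while the two failed stopping tests certify a true $(\eps,\gamma)$-SOSP.

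First I would quantify the required accuracy. Using Assumption~\ref{assumption:bounded_variance} with a concentration bound for averages of $b_g$ (resp.\ $b_H$) fresh independent samples, the stated batch sizes give, at any fixed iterate, $\|\bbd_t-\nabla f(\bbx_t)\|\le \delta_g$ and $\|\bbH_t-\nabla^2 f(\bbx_t)\|\le \delta_H$ with high probability, where $\delta_g=\mathcal{O}(\min\{\eps,\rho^2\gamma^2\})$ (forcing $b_g=\mathcal{O}(\max\{\eps^{-2},\rho^{-4}\gamma^{-4}\})$) and $\delta_H=\mathcal{O}(\rho\gamma)$ (forcing $b_H=\mathcal{O}(\rho^{-2}\gamma^{-2})$). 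Since the deterministic analysis already bounds the stopping time by $T=\mathcal{O}(\max\{\eps^{-2},\rho^{-3}\gamma^{-3}\})$, I would define the good event $\mathcal{E}$ as the intersection of these accuracy events over $t\le T$; fresh per-step samples make each estimate concentrate conditionally on $\bbx_t$, so a union bound over $t\le T$ keeps the failure probability small. On $\mathcal{E}$ the descent lemmas go through: in the first-order step, $\bbd_t^\top(\bbv_t-\bbx_t)<-\eps/2$ and $\delta_g D\le\eps/4$ give $\nabla f(\bbx_t)^\top(\bbv_t-\bbx_t)<-\eps/4$, so Proposition~\ref{FW_proposition} yields a decrease of $\mathcal{O}(\eps^2)$; in the second-order step, $\delta_H=\mathcal{O}(\rho\gamma)$ converts $q(\bbu_t)<-\rho\gamma/2$ into $(\bbu_t-\bbx_t)^\top\nabla^2 f(\bbx_t)(\bbu_t-\bbx_t)<-\rho\gamma/4$, and with $\sigma=\Theta(\rho\gamma/MD^3)$ Proposition~\ref{SOU_proposition} gives a decrease of $\mathcal{O}(\rho^3\gamma^3)$, once the new positive first-order term $\nabla f(\bbx_t)^\top(\bbu_t-\bbx_t)\le r+\delta_g D=\mathcal{O}(\rho^2\gamma^2)$ coming from the relaxed constraint is absorbed.

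For termination correctness I would show the returned $\bbx_t$ is a genuine $(\eps,\gamma)$-SOSP. The first-order condition follows from $\min_{\bbx\in\ccalC}\bbd_t^\top(\bbx-\bbx_t)\ge-\eps/2$ and $\delta_g D\le\eps/2$. For the curvature condition the key observation is that, because the stochastic constraint $\bbd_t^\top(\bbu-\bbx_t)\le r$ is a relaxation, every true tangent direction ($\nabla f(\bbx_t)^\top(\bbx-\bbx_t)=0$) obeys $\bbd_t^\top(\bbx-\bbx_t)\le\delta_g D\le r$ and is thus feasible for the stochastic subproblem; hence the stochastic optimum $q(\bbu^*)$ lower-bounds $q$ over the entire true tangent cone. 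Combining $q(\bbu_t)\ge-\rho\gamma/2$ with the $\rho$-approximation guarantee \eqref{eq:rho_approx_solution} gives $q(\bbu^*)\ge-\gamma/2$, and $\delta_H D^2\le\gamma/2$ then upgrades this to $(\bbx-\bbx_t)^\top\nabla^2 f(\bbx_t)(\bbx-\bbx_t)\ge-\gamma$ for all feasible true tangent directions. Finally, on $\mathcal{E}$ each first-order step lowers $f$ by $\mathcal{O}(\eps^2)$ and each second-order step by $\mathcal{O}(\rho^3\gamma^3)$; since $f$ is bounded below on the compact $\ccalC$ (Assumption~\ref{assumption:bounded_diameter} and continuity), the two step counts are $\mathcal{O}(\eps^{-2})$ and $\mathcal{O}(\rho^{-3}\gamma^{-3})$, yielding the claimed $\mathcal{O}(\max\{\eps^{-2},\rho^{-3}\gamma^{-3}\})$ total.

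I expect the main obstacle to be the interplay between the inequality relaxation and the descent lemma. The relaxation to $\bbd_t^\top(\bbu-\bbx_t)\le r$ is precisely what makes the curvature subproblem usable with a noisy gradient, but it injects a positive first-order term whose size must be simultaneously (i) large enough that the true tangent cone stays feasible, which is what licenses the SOSP certificate, and (ii) small enough to be dominated by the $\mathcal{O}(\rho^3\gamma^3)$ curvature decrease in the Taylor expansion. Threading $r=\mathcal{O}(\rho^2\gamma^2)$ and $\delta_g=\mathcal{O}(\rho^2\gamma^2)$ through both constraints, while keeping the union bound over the random stopping time valid through fresh per-step sampling, is the delicate part of the argument.
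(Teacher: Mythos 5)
Your proposal reproduces the paper's argument for the termination certificate (the inclusion of the true tangent set $\{\bbx : \nabla f(\bbx_t)^\top(\bbx-\bbx_t)=0\}$ into the relaxed feasible set $\{\bbx : \bbd_t^\top(\bbx-\bbx_t)\leq r\}$, followed by the $\rho$-approximation ratio and the Hessian error bound), and your per-step descent calculations on the good event are sound. The genuine gap is in how you control the probability of the good event $\mathcal{E}$. Assumption~\ref{assumption:bounded_variance} only bounds \emph{second moments}, so the only concentration available for the batch averages is Chebyshev/Markov: with the stated batch sizes $b_g=\mathcal{O}(\max\{\eps^{-2},\rho^{-4}\gamma^{-4}\})$ and $b_H=\mathcal{O}(\rho^{-2}\gamma^{-2})$, each per-iteration accuracy event $\|\bbd_t-\nabla f(\bbx_t)\|\leq\delta_g$ holds only with \emph{constant} probability (e.g., $15/16$), not with probability $1-o(1)$, and certainly not $1-\mathcal{O}(1/T)$. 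Your union bound over $t\leq T$ with $T=\mathcal{O}(\max\{\eps^{-2},\rho^{-3}\gamma^{-3}\})$ therefore yields a failure probability of order $T\cdot\text{const}$, which is vacuous. To repair your route you would need either batch sizes inflated by a factor of $T$ (contradicting the theorem statement) or a stronger noise assumption (sub-Gaussian or bounded errors) permitting exponential concentration — neither of which is available here.

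The paper avoids this obstacle by never requiring the estimates to be accurate at every iteration. It shows the per-step decrease holds \emph{in conditional expectation} — the error terms enter as $\E{\|\bbd_t-\nabla f(\bbx_t)\|\mid\ccalF_t}\leq\nu/\sqrt{b_g}$ via $\E{X}\leq\sqrt{\E{X^2}}$, which needs no high-probability event at all — and then bounds the random stopping time $T$ by a Wald-type identity, $\E{f(\bbx_0)-f(\bbx_T)}\geq\min\{\mathcal{O}(\eps^2),\mathcal{O}(\rho^3\gamma^3)\}\,\E{T}$, followed by Markov's inequality on $T$. Concentration of $\bbd_t$ and $\bbH_t$ is invoked only once, at the single returned iterate, so the total failure probability is a sum of a few constants (the paper gets $\geq 0.92$) independent of $T$. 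This is precisely what lets constant-factor batch sizes suffice under a bare variance bound, and it is the structural idea missing from your proposal.
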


The result in Theorem~\ref{thm:main_thm_stochastic} indicates that the total number of iterations to reach an $(\eps,\gamma)$-SOSP is at most $\mathcal{O}(\max\{\eps^{-2}, \rho^{-3}\gamma^{-3}\})$. As each iteration at most requires $\mathcal{O}(\max\{\rho^{-4}\gamma^{-4}, \eps^{-2}\})$ stochastic gradient and $\mathcal{O}(\rho^{-2}\gamma^{-2})$ stochastic Hessian evaluations, the total number of stochastic gradient and Hessian computations to reach an $(\eps,\gamma)$-SOSP is of $\mathcal{O}(\max\{\eps^{-2}\rho^{-4}\gamma^{-4}, \eps^{-4}, \rho^{-7}\gamma^{-7}\})$ and $\mathcal{O}(\max\{\eps^{-2}\rho^{-3}\gamma^{-3},\rho^{-5}\gamma^{-5}\})$, respectively.

\section{Appendix}

\subsection{Proof of Proposition \ref{prop:nec_conds}}
The claim in \eqref{eq:nec_cond_first_order} follows from Proposition 2.1.2 in \citep{bertsekas1999nonlinear}. The proof for the claim in \eqref{eq:nec_cond_second_order} is similar to the proof of Proposition 2.1.2 in \citep{bertsekas1999nonlinear}, and we mention it for completeness. 

We prove the claim in \eqref{eq:nec_cond_second_order} by contradiction. Suppose that $(\bbx-\bbx^*)^\top\nabla^2 f(\bbx^*)(\bbx-\bbx^*) <  0$ for some $\bbx\in \ccalC$ satisfying $\nabla f(\bbx^*)^\top(\bbx-\bbx^*)= 0$. By the mean value theorem, for any $\eps>0$ there exists an $\alpha\in[0,1]$ such that 
\begin{align}
&f(\bbx^*+\eps(\bbx-\bbx^*))\nonumber\\
& = f(\bbx^*) + \eps \nabla f(\bbx^*)^\top(\bbx-\bbx^*) + \eps^2 (\bbx-\bbx^*)\nabla^2 f(\bbx^*+\alpha\eps(\bbx-\bbx^*))^\top(\bbx-\bbx^*),
\end{align}
Use the relation $\nabla f(\bbx^*)^\top(\bbx-\bbx^*)= 0$  to simplify the right hand side to
\begin{equation}\label{eq:1}
f(\bbx^*+\eps(\bbx-\bbx^*)) =f(\bbx^*)+  \eps^2 (\bbx-\bbx^*)\nabla^2 f(\bbx^*+\alpha\eps(\bbx-\bbx^*))^\top(\bbx-\bbx^*).
\end{equation}
Note that since $(\bbx-\bbx^*)^\top\nabla^2 f(\bbx^*)(\bbx-\bbx^*) <  0$ and the Hessian is continuous, we have for all sufficiently small $\eps>0$,  $(\bbx-\bbx^*)\nabla^2 f(\bbx^*+\alpha\eps(\bbx-\bbx^*))^\top(\bbx-\bbx^*)<0$. This observation and the expression in \eqref{eq:1} follows that for sufficiently small $\eps$ we have $f(\bbx^*+\eps(\bbx-\bbx^*))<f(\bbx^*)$. Note that the point $\bbx^*+\eps(\bbx-\bbx^*)$ for all $\eps\in [0,1]$ belongs to the set $\ccalC$ and satisfies the inequality $\nabla f(\bbx^*)^\top((\bbx^*+\eps(\bbx-\bbx^*))-\bbx^*)= 0$. Therefore, we obtained a contradiction of the local optimality of $\bbx^*$.

\subsection{Proof of Proposition \ref{FW_proposition}}
First consider the definition $G(\bbx_t)= \max_{\bbx \in \ccalC}\{ -\nabla f(\bbx_t)^\top(\bbx-\bbx_t) \}$ which is also known as Frank-Wolfe gap \citep{lacoste2016convergence}. This constant measures how close the point $\bbx_t$ is to be a first-order stationary point. If $G(\bbx_t)\leq\eps$, then $\bbx_t$ is an $\eps$-first-order stationary point. Let's assume that $G(\bbx_t)>\eps$. Then, based on the Lipschitz continuity of gradients and the definition of $G(\bbx_t)$ we can write
\begin{align}
f(\bbx_{t+1})
&\leq  f(\bbx_t) +\nabla f(\bbx_t)^\top(\bbx_{t+1}-\bbx_{t}) +\frac{L}{2}\|\bbx_{t+1}-\bbx_{t}\|^2\nonumber\\
&=  f(\bbx_t) +\eta \nabla f(\bbx_t)^\top(\bbv_t-\bbx_{t}) +\frac{L\eta^2}{2}\|\bbv_{t}-\bbx_{t}\|^2\nonumber\\
&\leq  f(\bbx_t) -\eta G(\bbx_t) +\frac{\eta^2D^2L}{2},
\end{align}
where the last inequality follows from $\|\bbv_{t}-\bbx_{t}\|\leq D$. Replacing the stepsize $\eta$ by its value $\eps /D^2L$ and $G(\bbx_t)$ by its lower bound $\eps$ lead to  
\begin{align}
f(\bbx_{t+1})
&\leq  f(\bbx_t) -\frac{\eps^2}{2D^2L} .
\end{align}
This result implies that if the current point $\bbx_t$ is not an $\eps$-first order stationary point, by following the update of Frank-Wolfe algorithm the objective function value decreases by ${\eps^2}/{2D^2L} $. 
Therefore, after at most $2D^2L(f(\bbx_0)-f(\bbx^*))/\eps^2 $ iterations we either reach the global minimum or one of the iterates $\bbx_t$ satisfies $G(\bbx_t)\leq\eps$ which implies that
\begin{equation}
\nabla f(\bbx_t)^\top(\bbx-\bbx_t)\geq -\eps,\qquad  \forall\ \bbx\in\ccalC,
\end{equation}
and the claim in Proposition \ref{FW_proposition} follows.

\subsection{Proof of Proposition \ref{PGD_proposition}}
First note, that based on the projection property we know that 
\begin{equation}
(\bbx_t -\eta \nabla f(\bbx_t) - \bbx_{t+1})^\top(\bbx- \bbx_{t+1})\leq 0, \qquad \forall\ \bbx\in \ccalC.
\end{equation}
Therefore, by setting $\bbx=\bbx_t$ we obtain that
\begin{equation}
 \eta \nabla f(\bbx_t) ^\top(\bbx_{t+1}-\bbx_t)\leq -\|\bbx_t- \bbx_{t+1}\|^2.
\end{equation}
Hence, we can replace the inner product $\nabla f(\bbx_t) ^\top(\bbx_{t+1}-\bbx_t)$ by its upper bound $-\|\bbx_t- \bbx_{t+1}\|^2/\eta$
\begin{align}
f(\bbx_{t+1})
&\leq  f(\bbx_t) +\nabla f(\bbx_t)^\top(\bbx_{t+1}-\bbx_{t}) +\frac{L}{2}\|\bbx_{t+1}-\bbx_{t}\|^2\nonumber\\
&\leq f(\bbx_t) -\frac{\|\bbx_t- \bbx_{t+1}\|^2}{\eta}+\frac{L}{2}\|\bbx_{t+1}-\bbx_{t}\|^2\nonumber\\
&= f(\bbx_t) -\frac{L}{2}\|\bbx_{t+1}-\bbx_{t}\|^2,
\end{align}
where the equality follows by setting $\eta=1/L$. Indeed, if $\bbx_{t+1}=\bbx_t$ then we are at a first-order stationary point, however, we need a finite time analysis. To do so, note that for any $\bbx\in\ccalC$ we have
\begin{equation}
(\bbx_t -\eta \nabla f(\bbx_t) - \bbx_{t+1})^\top(\bbx- \bbx_{t+1})\leq 0.
\end{equation}
 Therefore, for any $\bbx\in\ccalC$  it holds
\begin{equation}
 \nabla f(\bbx_t) ^\top(\bbx- \bbx_{t+1})\geq  L (\bbx_t- \bbx_{t+1})^\top(\bbx- \bbx_{t+1}),
\end{equation}
which implies that 
\begin{align}
 \nabla f(\bbx_t) ^\top(\bbx- \bbx_{t})
 &\geq  \nabla f(\bbx_t) ^\top( \bbx_{t+1}-\bbx_t) + L (\bbx_t- \bbx_{t+1})^\top(\bbx- \bbx_{t+1})\nonumber\\
 &\geq -K \|\bbx_{t+1}-\bbx_t\| - LD \|\bbx_t- \bbx_{t+1}\|\nonumber\\
  &\geq -(K+ LD) \|\bbx_t- \bbx_{t+1}\|,
\end{align}
 where $K$ is an upper bound on the norm of gradient over the convex set $\ccalC$. Therefore, we can write 
\begin{align}
\min_{\bbx\in\ccalC} \nabla f(\bbx_t) ^\top(\bbx- \bbx_{t})
\geq -(K+ LD) \|\bbx_t- \bbx_{t+1}\|,
\end{align}
 Combining these results, we obtain that we should check the norm $\|\bbx_t- \bbx_{t+1}\|$ at each iteration and check whether if it is larger than $\eps/(K+ LD)$ or not. If the norm is larger than the threshold then
\begin{align}
 f(\bbx_{t+1})\leq  f(\bbx_t) -\frac{\eps^2L}{2(K+ LD)^2}.
\end{align}
If the norm is smaller than the threshold then we stop and the iterate $\bbx_t$ satisfies the inequality 
\begin{equation}
\nabla f(\bbx_t)^\top(\bbx-\bbx_t)\geq -\eps,\qquad  \forall\ \bbx\in\ccalC.
\end{equation}
Note that this process can not take more than $\mathcal{O}(\frac{f(\bbx_0)-f(\bbx^*)}{\eps^2})$ iterations.

\subsection{Proof of Proposition \ref{SOU_proposition}}
The Taylor's expansion of the function $f$ around the point $\bbx_t$ and $M$-Lipschitz continuity of the Hessians imply that
\begin{equation}\label{eq:proof_main_thm_100}
 f(\bbx_{t+1}) \leq  f(\bbx_t) + \nabla f(\bbx_t)^\top(\bbx_{t+1}-\bbx_t)  +\frac{1}{2} (\bbx_{t+1}-\bbx_t)^\top \nabla^2 f(\bbx) (\bbx_{t+1}-\bbx_t) + \frac{M}{6} \|\bbx_{t+1}-\bbx_t\|^3.
\end{equation}
Replace $\bbx_{t+1}-\bbx_t$ by the expression $\sigma(\bbu_{t}-\bbx_t)$ to obtain
\begin{equation}\label{eq:proof_main_thm_200}
 f(\bbx_{t+1})    \leq   f(\bbx_t) +\sigma \nabla f(\bbx_t)^\top( \bbu_{t}-\bbx_t)+ \frac{\sigma^2}{2} (\bbu_t-\bbx_t)^\top \nabla^2 f(\bbx) (\bbu_t-\bbx_t) + \frac{M\sigma^3}{6} \|\bbu_t-\bbx_t\|^3.
\end{equation}
Since, $\bbu_t$ is a $\rho$-approximate solution for the subproblem in \eqref{eq:quad_problem} with the objective function value $q(\bbu_t)\leq -\rho \gamma$, we can substitute the quadratic term $(\bbu_t-\bbx_t)^\top \nabla^2 f(\bbx) (\bbu_t-\bbx_t)$ by its upper bound $-\rho \gamma$. Additionally, the vector $\bbu_t$ is chosen such that  $\nabla f(\bbx_t)^\top( \bbu_{t}-\bbx_t)=0$ and therefore the linear term in \eqref{eq:proof_main_thm_200} can be eliminated. Further, the cubic term $ \|\bbu_t-\bbx_t\|^3$ is upper bounded by $D^3$ since both $\bbu_t$ and $\bbx_t$ belong to the convex set $\ccalC$. Applying these substitutions into \eqref{eq:proof_main_thm_200} yields
\begin{equation}\label{eq:proof_main_thm_300}
 f(\bbx_{t+1})    \leq   f(\bbx_t) -\frac{\sigma^2\rho\gamma}{2} + \frac{\sigma^3MD^3}{6} .
\end{equation}
By setting $\sigma=\rho\gamma/MD^3$ in \eqref{eq:proof_main_thm_300} it follows that
\begin{align}
 f(\bbx_{t+1})  &  \leq   f(\bbx_t) -\frac{\rho^3\gamma^3}{2M^2D^6} + \frac{\rho^3\gamma^3}{6M^2D^6} \nonumber\\
 &=  f(\bbx_t) -\frac{\rho^3\gamma^3}{3M^2D^6}.
\end{align}
Therefore, in this case, the objective function value decreases at least by a fixed value of $\mathcal{O}(\rho^{3}\gamma^{3})$. 
\subsection{Proof of Theorem \ref{thm:main_thm}}

Then at each iteration, either the first oder optimality condition is not satisfied and the function value decreases by a constant of  $\mathcal{O}(\eps^{2})$, or this condition is satisfied and we use a second-order update which leads to a objective function value decrement of $\mathcal{O}(\rho^{3}\gamma^{3})$. This shows that if have not reached an $(\eps,\gamma)$-second order stationary point the objective function value decreases at least by $\mathcal{O}(\min\{\eps^{2}, \rho^{3}\gamma^{3}\})$. Therefore, we either reach the global minimum or converge to an $(\eps,\gamma)$-second order stationary point of Problem~\eqref{eq:main_problem} after at most $\mathcal{O}\left(\frac{f(\bbx_0)-f(\bbx^*)}{\min\{\eps^{2}, \rho^{3}\gamma^{3}\}}\right)$ iterations which also can be written as $\mathcal{O}((f(\bbx_0)-f(\bbx^*))(\eps^{-2}+ \rho^{-3}\gamma^{-3}))$.

\subsection{Proof of Theorem \ref{thm:main_thm_stochastic}}

In this proof, for notation convenience, we define $\eps'=\eps/2$ and $\gamma'=\gamma/2$.

First, note that the condition in Assumption~\ref{assumption:bounded_variance} and the fact that $\nabla F(\bbx,\bbtheta)$ and $\nabla^2 F(\bbx,\bbtheta)$ are the unbiased estimators of the gradient $\nabla f(\bbx)$ and Hessian $\nabla^2 f(\bbx)$ imply that the variance of the batch gradient $\bbd_t$ and the batch Hessian $\bbH_t$ approximations are upper bounded by 
\begin{equation}\label{eq:bound_on_batch_approx}
\E{ \|\bbd_t-\nabla f(\bbx_t)\|^2}\leq \frac{\nu^2}{b_g}, \qquad 
\E{ \|\bbH_t-\nabla^2 f(\bbx_t)\|^2}\leq \frac{\xi^2}{b_H}.
\end{equation}
Here we assume that $b_g$ and $b_H$ satisfy the following conditions, 
\begin{equation}\label{condition_on_batch_sizes}
b_g =\max\left\{ \frac{324\nu^2 M^2 D^8}{\rho^4\gamma'^4}  , \frac{16D^2\nu^2}{\eps'^2}   \right\} ,  \qquad b_H =\frac{81D^4\xi^2}{\rho^2\gamma'^2}  . 
\end{equation}
We further set the parameter $r$ as
\begin{equation}\label{def_r_para}
r= \frac{\rho^2\gamma'^2}{18MD^3}.
\end{equation}

Now we proceed to analyze the complexity of Algorithm 2. First, consider the case that the current iterate $\bbx_t$ satisfies the inequality $\bbd_t^{\top} (\bbv_t-\bbx_t) < -\eps'$ and therefore we perform the first-order update in step 4. In this case, we can show that
\begin{align}\label{proof_stochastic_100}
f(\bbx_{t+1}) 
& \leq f(\bbx_t) + \nabla  f(\bbx_t)^\top (\bbx_{t+1}-\bbx_{t}) +\frac{L}{2}\|\bbx_{t+1}-\bbx_{t}\|^2
\nonumber\\
& = f(\bbx_t) + \eta \nabla  f(\bbx_t)^\top (\bbv_{t}-\bbx_{t}) +\frac{\eta^2L}{2}\|\bbv_t-\bbx_{t}\|^2
\nonumber\\
& \leq  f(\bbx_t) + \eta \bbd_t^\top (\bbv_{t}-\bbx_{t}) +\eta (\nabla  f(\bbx_t)-\bbd_t)^\top (\bbv_{t}-\bbx_{t}) +\frac{\eta^2LD^2}{2}
\nonumber\\
& \leq  f(\bbx_t) - \eta\eps' +\eta D \|\nabla  f(\bbx_t)-\bbd_t\| +\frac{\eta^2LD^2}{2},
\end{align}
where in the last inequality we used $\bbd_t^{\top} (\bbv_t-\bbx_t) < -\eps'$ and the fact that both $\bbv_t$ and $\bbx_t$ belong to the set $\ccalC$ and therefore $\|\bbx_t-\bbv_t\|\leq D$. Consider $\ccalF_t$ as the sigma algebra that measures all sources of randomness up to step $t$. Then, computing the expected value of both sides of \eqref{proof_stochastic_100} given $\ccalF_t$ leads to
\begin{align}\label{proof_stochastic_200}
\E{f(\bbx_{t+1})\mid \ccalF_t } \leq  f(\bbx_t) - \eta\eps' +\frac{\eta D \nu}{\sqrt{b_g}} +\frac{\eta^2LD^2}{2}
\end{align}
where we used the inequality $\E{X}\leq \sqrt{\E{X^2}}$ when $X$ is a positive random variable. Replace the stepsize $\eta$ by its value ${\eps'}/({D^2L})$ and the batch size {$b_g$ by its lower bound ${(16D^2\nu^2)}/({\eps'^2})$} to obtain
\begin{align}\label{proof_stochastic_300}
\E{f(\bbx_{t+1}) \mid \ccalF_t } \leq  f(\bbx_t) -\frac{\eps'^2}{4D^2L} .
\end{align}
Hence, in this case, the objective function value decreases in expectation by a constant factor of $\mathcal{O}(\eps'^2)$.

Now we proceed to study the case that the current iterate $\bbx_t$ does not satisfy the inequality $\bbd_t^{\top} (\bbv_t-\bbx_t) < -\eps'$ and we need to perform the second-order update in step 8. 
In this case, we can show that 
\begin{align}\label{proof_stochastic_400}
 f(\bbx_{t+1}) 
 & \leq  f(\bbx_t) + \nabla f(\bbx_t)^\top(\bbx_{t+1}-\bbx_t)  +\frac{1}{2} (\bbx_{t+1}-\bbx_t)^\top \nabla^2 f(\bbx) (\bbx_{t+1}-\bbx_t) + \frac{M}{6} \|\bbx_{t+1}-\bbx_t\|^3\nonumber\\
 & \leq  f(\bbx_t) + \sigma \nabla f(\bbx_t)^\top(\bbu_t-\bbx_t)  +\frac{\sigma^2}{2} (\bbu_t-\bbx_t)^\top \nabla^2 f(\bbx) (\bbu_t-\bbx_t) + \frac{\sigma^3 MD^3}{6}\nonumber\\
  & \leq  f(\bbx_t) + \sigma \bbd_t^\top(\bbu_t-\bbx_t)+ \sigma (\nabla f(\bbx_t)-\bbd_t)^\top(\bbu_t-\bbx_t)  +\frac{\sigma^2}{2} (\bbu_t-\bbx_t)^\top \bbH_t  (\bbu_t-\bbx_t) \nonumber\\
& \qquad + \frac{\sigma^2}{2} (\bbu_t-\bbx_t)^\top (\nabla^2 f(\bbx)-\bbH_t) (\bbu_t-\bbx_t) +\frac{\sigma^3 MD^3}{6}.
\end{align}
Note that $\bbu_t$ is a $\rho$-approximate solution for the subproblem in step 6 of Algorithm 2, with the objective function value less than $ -\rho \gamma'$. This observation implies that the quadratic term $(\bbu_t-\bbx_t)^\top \bbH_t  (\bbu_t-\bbx_t)$ is bounded above by $-\rho\gamma'$. Further, the linear term $ \bbd_t^\top(\bbu_t-\bbx_t) $ is less than $r$ according to the constraint of the subproblem. Applying these substitutions and using the Cauchy-Schwartz inequality multiple times lead to
\begin{align}\label{proof_stochastic_500}
 f(\bbx_{t+1}) 
 \leq  f(\bbx_t) +\sigma r+ \sigma D \|\bbd_t-\nabla f(\bbx_t)\|  -\frac{\sigma^2\rho\gamma'}{2} + \frac{\sigma^2D^2}{2} \| \bbH_t- \nabla^2 f(\bbx)\| +\frac{\sigma^3 MD^3}{6}.
\end{align}
Compute the conditional expected value of both sides of \eqref{proof_stochastic_500} and use the inequalities in \eqref{eq:bound_on_batch_approx} to obtain
\begin{align}\label{proof_stochastic_600}
\E{ f(\bbx_{t+1}) \mid \ccalF_t}
 \leq  f(\bbx_t) +\sigma r+ \frac{\sigma D \nu}{\sqrt{b_g}}  -\frac{\sigma^2\rho\gamma'}{2} + \frac{\sigma^2D^2 \xi}{2\sqrt{b_H}} +\frac{\sigma^3 MD^3}{6}.
\end{align}
By setting the stepsize $\sigma=\rho\gamma'/MD^3$ in \eqref{proof_stochastic_600}  it follows that
\begin{align}\label{proof_stochastic_700}
\E{ f(\bbx_{t+1}) \mid \ccalF_t}
 \leq  f(\bbx_t)  -\frac{\rho^3\gamma'^3}{3L^2D^6} +\frac{r\rho\gamma'}{MD^3} + \frac{\rho\gamma'  \nu}{MD^2\sqrt{b_g}}+ \frac{\rho^2\gamma'^2 \xi}{2M^2D^4\sqrt{b_H}} .
\end{align}
Moreover, setting {$r=\frac{\rho^2\gamma'^2}{18MD^3}$} and $b_H=\frac{81D^4\xi^2}{\rho^2\gamma'^2}$, and replacing $b_g$ by its lower bound $\frac{324\nu^2 M^2 D^8}{\rho^4\gamma'^4}$ lead to
\begin{align}\label{proof_stochastic_800}
\E{ f(\bbx_{t+1})\mid \ccalF_t}
 \leq  f(\bbx_t) -\frac{\rho^3\gamma'^3}{6M^2D^6}
\end{align}
Hence, in this case, the expected objective function value decreases by a constant of $\mathcal{O}(\rho^3\gamma'^3)$. 

By combining the results in \eqref{proof_stochastic_300} and \eqref{proof_stochastic_800}, we obtain that if the iterate $\bbx_t$ is not the final iterate the objective function value at step $t+1$ satisfies the following ineqaulity
\begin{align}
\E{ f(\bbx_{t+1}) \mid \ccalF_t}
 \leq  f(\bbx_t) -\min\left\{\frac{\eps'^2}{4LD^2} , \frac{\rho^3\gamma'^3}{6M^2D^6} \right\}.
\end{align}
Let us define $T$ as the number of iterations we perform until Algorithm 2 stops. We use an argument similar to Wald's lemma to derive an upper bound on the expected number of iterations $T$ that we need to run the algorithm. Note that 
\begin{align}
\E{ f(\bbx_0)-f(\bbx_T) } 
& = \E{\sum_{t=1}^{T} (f(\bbx_{t-1}) -f(\bbx_{t}) )} \nonumber\\
& =  \E{\E{\sum_{t=1}^{T} (f(\bbx_{t-1}) -f(\bbx_{t}) )} \Bigg|\ T=k} \nonumber\\
& =  \E{\E{\sum_{t=1}^{k} (f(\bbx_{t-1}) -f(\bbx_{t}) )} \Bigg|\ T=k} \nonumber\\
& =  \sum_{k=1}^\infty \E{\sum_{t=1}^{k} (f(\bbx_{t-1}) -f(\bbx_{t}) )} \mathbb{P}(T=k) \nonumber\\
& =  \sum_{k=1}^\infty \sum_{t=1}^{k} \E{ (f(\bbx_{t-1}) -f(\bbx_{t}) )} \mathbb{P}(T=k) \nonumber\\
& \geq   \sum_{k=1}^\infty \sum_{t=1}^{k}  \min\left\{\frac{\eps'^2}{4LD^2} , \frac{\rho^3\gamma'^3}{6M^2D^6} \right\} \ \mathbb{P}(T=k) \nonumber\\
& =  \min\left\{\frac{\eps'^2}{4LD^2} , \frac{\rho^3\gamma'^3}{6M^2D^6} \right\} \sum_{k=1}^\infty  k\ \mathbb{P}(T=k) \nonumber\\
& =   \min\left\{\frac{\eps'^2}{4LD^2} , \frac{\rho^3\gamma'^3}{6M^2D^6} \right\} \E{T}.
\end{align}
Hence, $\E{T}\leq \E{f(\bbx_0)- f(\bbx_T) } /{\min\left\{\frac{\eps'^2}{4LD^2} , \frac{\rho^3\gamma'^3}{6M^2D^6} \right\}}$. We further know that $f(\bbx_T)\geq f(\bbx^*)$ which implies that 
\begin{equation}
\E{T}\leq (f(\bbx_0) -  f(\bbx^*))\max\left\{\frac{4LD^2}{\eps'^2} , \frac{6M^2D^6} {\rho^3\gamma'^3}\right\}.
\end{equation}
Using Markov's inequality we can show that 
\begin{align}
\mathbb{P}\left( T \leq a  \right)\geq 1-\frac{ (f(\bbx_0) -  f(\bbx^*))\max\left\{\frac{4LD^2}{\eps'^2} , \frac{6M^2D^6} {\rho^3\gamma'^3}\right\}}{a}
\end{align}
Set $a=\frac{(f(\bbx_0) -  f(\bbx^*))}{\delta}\max\left\{\frac{4LD^2}{\eps'^2} , \frac{6M^2D^6} {\rho^3\gamma'^3}\right\}$ to obtain that 
\begin{align}
\mathbb{P}\left( T \leq \frac{ (f(\bbx_0) -  f(\bbx^*))\max\left\{\frac{4LD^2}{\eps'^2} , \frac{6M^2D^6} {\rho^3\gamma'^3}\right\}}{\delta}  \right)\geq 1-\delta.
\end{align}
Therefore, it follows that with high probability the total number of iterations $T$ that Algorithm 2 runs is at most $\mathcal{O}(\max\left\{{\eps'^{-2}} ,{\rho^{-3}\gamma'^{-3}}\right\})$. 



Now it remains to show that the outcome of Algorithm 2 is an $(\eps,\gamma)$-SOSP of Problem~\eqref{eq:stoc_main_problem} with high probability. Let's assume that $\bbx_t$ is the final output of Algorithm 2. Then, we know that $\bbx_t$ satisfies the conditions
\begin{equation}\label{proof_stochastic_part_2_100}
\bbd_t^{\top}(\bbx-\bbx_t)\geq-\eps' \quad \forall \ \bbx\in \ccalC,
\end{equation}
and 
\begin{equation}\label{proof_stochastic_part_2_200}
 (\bbx-\bbx_t)^\top \bbH_t  (\bbx-\bbx_t)\geq -\gamma' \quad \forall \ \bbx\in\ccalC,\ \bbd_t^\top(\bbx-\bbx_t)\leq r.
\end{equation}
 First, we use the condition in \eqref{proof_stochastic_part_2_100} to show that $\bbx_t$ satisfies the first-order optimality condition with high probability. Note that for any $\bbx\in \ccalC$ it holds that
\begin{align}\label{proof_stochastic_part_2_300}
\nabla f(\bbx_t)^{\top} (\bbx-\bbx_t) 
&=\bbd_t^{\top} (\bbx-\bbx_t) + (\nabla f(\bbx_t)-\bbd_t)^{\top} (\bbx-\bbx_t) \nonumber\\
&\geq \bbd_t^{\top} (\bbx-\bbx_t) - D\|\nabla f(\bbx_t)-\bbd_t\|.
\end{align}
Now compute the minimum of both sides of \eqref{proof_stochastic_part_2_300} for all $\bbx\in \ccalC$ to obtain
\begin{align}\label{proof_stochastic_part_2_400}
\min_{\bbx\in\ccalC} \{\nabla f(\bbx_t)^{\top} (\bbx-\bbx_t) \}
&\geq \min_{\bbx\in\ccalC}\{\bbd_t^{\top} (\bbx-\bbx_t) - D\|\nabla f(\bbx_t)-\bbd_t\|\}\nonumber\\
&= \min_{\bbx\in\ccalC}\{\bbd_t^{\top} (\bbx-\bbx_t) \} - D\|\nabla f(\bbx_t)-\bbd_t\|\nonumber\\
&\geq -\eps' - D\|\nabla f(\bbx_t)-\bbd_t\|,
\end{align}
where the equality holds since $D\|\nabla f(\bbx_t)-\bbd_t\|$ does not depend on $\bbx$, and the last inequality is implied by \eqref{proof_stochastic_part_2_100}. Since $\E{\|\nabla f(\bbx_t)-\bbd_t\|^2}\leq \nu^2/b_g$ we obtain from Markov's inequality that 
\begin{equation}\label{proof_stochastic_part_2_500}
\mathbb{P}\left( \|\nabla f(\bbx_t)-\bbd_t\| \leq \eps''\right) \geq 1-\frac{\nu^2}{b_g\eps''^2}.
\end{equation}
Therefore, by combining the results in \eqref{proof_stochastic_part_2_400} and \eqref{proof_stochastic_part_2_500} we obtain that
\begin{equation}\label{proof_stochastic_part_2_600}
\mathbb{P}\left( \min_{\bbx\in\ccalC} \{\nabla f(\bbx_t)^{\top} (\bbx-\bbx_t) \} \geq -(\eps'+D\eps'')  \right) \geq 1-\frac{\nu^2}{b_g\eps''^2}.
\end{equation}
Now by setting $\eps''=\eps'/D$ it follows from \eqref{proof_stochastic_part_2_600} that with probability at least $1-\nu^2 D^2/b_g \eps'^2$ the final iterate $\bbx_t$ satisfies  
\begin{equation}\label{proof_stochastic_part_2_700}
\nabla f(\bbx_t)^{\top} (\bbx-\bbx_t) \geq -2\eps' \qquad \forall \ \bbx\in \ccalC.
\end{equation}
Replacing $\eps'$ by $\eps/2$ leads to 
\begin{equation}\label{proof_stochastic_part_2_702}
\nabla f(\bbx_t)^{\top} (\bbx-\bbx_t) \geq -\eps \qquad \forall \ \bbx\in \ccalC.
\end{equation}

It remains to show that with high probability the final iterate $\bbx_t$ satisfies the second-order optimality condition.

First, consider the sets $\ccalA_t=\{\bbx \mid \nabla f(\bbx_t)^{\top} (\bbx-\bbx_t) =0 \}$ and $\ccalB_t=\{ \bbx \mid \bbd_t^{\top}(\bbx-\bbx_t)\leq r \}$. We proceed to show that with high probability $\ccalA_t\subset\ccalB_t$. If $\bby$ satisfies the condition 
\begin{equation}\label{proof_stochastic_part_2_800}
 \nabla f(\bbx_t)^{\top} (\bby-\bbx_t) =0, 
\end{equation}
then it can be shown that 
\begin{align}\label{proof_stochastic_part_2_900}
\bbd_t^{\top} (\bby-\bbx_t)
&\leq  \nabla f(\bbx_t)^{\top} (\bby-\bbx_t)+ (\bbd_t-\nabla f(\bbx_t))^{\top} (\bby-\bbx_t) \nonumber\\
&\leq   D\|\bbd_t-\nabla f(\bbx_t)\|.
\end{align}
Since $\E{\|\nabla f(\bbx_t)-\bbd_t\|^2}\leq \nu^2/b_g$ we obtain from Markov's inequality that 
\begin{equation}\label{proof_stochastic_part_2_1000}
\mathbb{P}\left( \|\nabla f(\bbx_t)-\bbd_t\| \leq \frac{r}{D}\right) \geq 1-\frac{\nu^2D^2}{b_gr^2}.
\end{equation}
Therefore, by combining the results in \eqref{proof_stochastic_part_2_900} and \eqref{proof_stochastic_part_2_1000} we obtain that
\begin{equation}\label{proof_stochastic_part_2_1100}
\mathbb{P}\left( \bbd_t^{\top} (\bby-\bbx_t) \leq r\right) \geq 1-\frac{\nu^2D^2}{b_gr^2}.
\end{equation}
This argument shows that if $\bby\in \ccalA_t$, then it also belongs to the set $\ccalB_t$, i.e., $\bby\in \ccalB_t$, with high probability. This result shows if an inequality holds for all  $\bbx$ that satisfy $\bbd_t^{\top}(\bbx-\bbx_t)\leq r$, then with high probability that inequality also holds for all $\bbx$ that satisfy the condition $\nabla f(\bbx_t)^{\top} (\bbx-\bbx_t) =0$.

Now, note that if $\bbx_t$ is the output of Algorithm 2, then for any $\bbx\in \ccalC$ satisfying $\bbd_t^{\top} (\bbx-\bbx_t)\leq r$ it holds that
\begin{align}\label{proof_stochastic_part_2_1200}
  (\bbx-\bbx_t)^\top \nabla^2 f(\bbx_t)  (\bbx-\bbx_t)  
 &= (\bbx-\bbx_t)^\top \bbH_t  (\bbx-\bbx_t)- (\bbx-\bbx_t)^\top( \bbH_t-\nabla^2 f(\bbx_t) ) (\bbx-\bbx_t)    \nonumber\\
 &\geq -\gamma' -D^2\|\bbH_t-\nabla^2 f(\bbx_t)\|.
\end{align}
Further, define the random variable $X_t=\|\bbH_t-\nabla^2 f(\bbx_t)\|$. As we know that $\E{X_t^2}\leq \xi^2/{b_H}$, it follows by Markov's inequality that $\mathbb{P}(X_t\leq a) \geq 1-\xi^2/({b_H}a^2)$. Therefore, we can write that
\begin{equation}\label{proof_stochastic_part_2_1300}
\mathbb{P}(\|\bbH_t-\nabla^2 f(\bbx_t)\|\leq \gamma'') \geq 1-\frac{\xi^2}{b_H\gamma''^2}.
\end{equation}
Hence, by using the results in \eqref{proof_stochastic_part_2_1200} and \eqref{proof_stochastic_part_2_1300}, we can show that with probability at least $1-\frac{\xi^2}{b_H\gamma''^2}$ for any $\bbx\in \ccalC$ satisfying $\bbd_t^{\top} (\bbx-\bbx_t)\leq r$ it holds
\begin{align}
  (\bbx-\bbx_t)^\top \nabla^2 f(\bbx_t)  (\bbx-\bbx_t)  
 &\geq -\gamma' -D^2\gamma''.
\end{align}
By setting $\gamma''= \gamma'/D^2$ it follows that $\bbx_t$ satisfies the condition 
\begin{align}
(\bbx-\bbx_t)^\top \nabla^2 f(\bbx_t)  (\bbx-\bbx_t)  \geq -2\gamma' \quad \forall\ \bbx\in\ccalC,\bbd_t^{\top} (\bbx-\bbx_t)\leq r,
\end{align}
with a probability larger than $1-\frac{\xi^2D^4}{b_H\gamma'^2}$. Further, with probability at least $1-\frac{\nu^2D^2}{b_gr^2}$ we know that $\ccalA_t\subset \ccalB_t$. These observations imply that if $\bbx_t$ is the output of Algorithm 2 it satisfies 
\begin{align}
(\bbx-\bbx_t)^\top \nabla^2 f(\bbx_t)  (\bbx-\bbx_t)  \geq -2\gamma' \quad \forall\ \bbx\in\ccalC,\nabla f(\bbx_t)^{\top} (\bbx-\bbx_t)=0,
\end{align}
with probability at least $1-\frac{\xi^2D^4}{b_H\gamma'^2}- \frac{\nu^2D^2}{b_gr^2}$, where we used the inequality 
\begin{align}
P(A \cap B) &= P(A)+P(B)-P(A\cup B)\nonumber\\
& \geq P(A)+P(B)-1.
\end{align}
By setting $\gamma'=\gamma/2$ we obtain that with probability at least $1-\frac{\xi^2D^4}{b_H\gamma'^2}- \frac{\nu^2D^2}{b_gr^2}$ the final iterate satisfies the condition 
\begin{align}
(\bbx-\bbx_t)^\top \nabla^2 f(\bbx_t)  (\bbx-\bbx_t)  \geq -\gamma \quad \forall\ \bbx\in\ccalC,\nabla f(\bbx_t)^{\top} (\bbx-\bbx_t)=0.
\end{align}

Therefore, with probability at least  $1-\frac{\nu^2 D^2}{b_g \eps'^2}-\frac{\xi^2D^4}{b_H\gamma'^2}- \frac{\nu^2D^2}{b_gr^2}$ the output of Algorithm 2 is an $(\eps,\gamma)$-SOSP of the stochastic optimization problem in \eqref{eq:stoc_main_problem}. This observation and the conditions on the batch sizes in \eqref{condition_on_batch_sizes} implies that  the output of Algorithm 2 is an $(\eps,\gamma)$-SOSP of the stochastic optimization problem in \eqref{eq:stoc_main_problem} with probability at least $1-\frac{1}{16}-\frac{1}{324}- \frac{\rho^2}{81}\geq 0.92$. (Note that $\rho\leq 1$).  Indeed, by increasing the size of batches $b_g$ and $b_H$ all the results hold with a higher probability.

\section*{Acknowledgment}

{This work was supported by DARPA Lagrange and ONR BRC Program. The authors would like to thank Yue Sun for pointing out a missing condition in the first draft of the paper.}

{{{
\bibliography{bibliography}
\bibliographystyle{abbrvnat}
}}}

\end{document}